\newtheorem{theorem}{Theorem}
\newtheorem{proposition}[theorem]{Proposition}
\newcommand{\BibTeX}{B\kern-.05em{\sc i\kern-.025em b}\kern-.08em\TeX}
\newcommand{\R}{\mathbb{R}}
\newcommand{\argmin}{\mathop{\rm argmin}\limits}
\begin{document}


\begin{frontmatter}


\paperid{3495} 


\title{Learning Neural Strategy-Proof Matching Mechanism from Examples}


\author{\fnms{Ryota}~\snm{Maruo}\thanks{Corresponding Author. Email: mryota@ml.ist.i.kyoto-u.ac.jp}}
\author{\fnms{Koh}~\snm{Takeuchi}}
\author{\fnms{Hisashi}~\snm{Kashima}} 
\address{Kyoto University}


\begin{abstract}
  Designing two-sided matching mechanisms is challenging when practical demands for matching outcomes are difficult to formalize and the designed mechanism must satisfy theoretical conditions.
  To address this, prior work has proposed a framework that learns a matching mechanism from examples, using a parameterized family that satisfies properties such as stability.
  However, despite its usefulness, this framework does not guarantee strategy-proofness (SP), and cannot handle varying numbers of agents or incorporate publicly available contextual information about agents, both of which are crucial in real-world applications.
  In this paper, we propose a new parametrized family of matching mechanisms that always satisfy strategy-proofness, are applicable for an arbitrary number of agents, and deal with public contextual information of agents, based on the serial dictatorship (SD). 
  This family is represented by NeuralSD, a novel neural network architecture based on SD, where agent rankings in SD are treated as learnable parameters computed from agents' contexts using an attention-based sub-network.
  To enable learning, we introduce tensor serial dictatorship (TSD), a differentiable relaxation of SD using tensor operations.
  This allows NeuralSD to be trained end-to-end from example matchings while satisfying SP.
  We conducted experiments to learn a matching mechanism from matching examples while satisfying SP. 
  We demonstrated that our method outperformed baselines in predicting matchings and on several metrics for goodness of matching outcomes.
\end{abstract}

\end{frontmatter}


\section{Introduction}
\label{sec:introduction}
Two-sided matching is a problem that constructs a set of pairs from two groups of agents based on the preferences that one group of agents has for the other.
Various real-world applications of two-sided matching exist such as labor markets for medical interns and residents~\cite{roth1984evolution}, assignments in schools~\cite{abdulkadiroglu2003school}, and loan markets between firms and banks~\cite{chen2013loan}.
Recently, publicly available contextual information about agents has been increasingly used to improve matchings in socially significant settings.  
For example, school choice in Brazil~\cite{aygun2021college,dur2020explicit} and matching children with daycare centers~\cite{sun2023daycare} have attempted to achieve more socially desirable outcomes in access to education and childcare using agents' contextual information.

A matching mechanism is required to balance the notions of goodness in matching outcomes and theoretical guarantees of the conditions it must satisfy.
Mechanism designers have developed various mechanisms for two-sided matching~\cite{roth1990two-sided}, which aim to optimize simple objectives aligned with desirable matching outcomes, such as maximizing social welfare.
A mechanism is also required to satisfy conditions like strategy-proofness (SP)~\cite{roth1982economics,roth1990two-sided}, which ensures that no agent can benefit from misrepresenting their preferences.
However, existing mechanisms do not always satisfy practical demands that are difficult to formalize simply~\cite{narasimhan2016automated}. 
For example, ethical requirements are often too complex to be represented by mathematical formulas~\cite{li2017ethics,sonmez2022market}.
As a result, practitioners construct ad-hoc mechanisms or empirical matching rules that reflect their implicit goodness, but do not satisfy theoretical conditions.
Satisfying both practical demands and theoretical guarantees is an urgent challenge. 

To address the aforementioned challenge, a general framework has been introduced by \citet{narasimhan2016automated} that learns a matching mechanism from example matchings generated by ad-hoc mechanisms or empirical rules, assuming that these examples reflect the practitioner's implicit objectives.
They proposed a parameterized family of matching mechanisms that satisfy the condition called stability~\cite{roth1990two-sided}, and used supervised learning to estimate parameters from examples to develop a mechanism that satisfies both practical demands and stability.
However, they do not address several critical issues that arise in practical and socially sensitive applications.
First, they do not consider strategy-proofness (SP), a condition that is crucial in socially sensitive applications like affirmative action~\cite{aygun2021college}, because all stable mechanisms are inherently not SP~\cite{roth1982economics,roth1990two-sided}.
Second, their framework assumes the number of agents to be constant, and cannot be applied to varying numbers of agents~\cite{nrmp2024impact}.
Third, they do not incorporate publicly observable contextual information about agents, which is essential for socially beneficial matchings~\cite{aygun2021college,dur2020explicit}.

In this paper, we propose a new framework utilizing examples for estimating matching mechanisms that always satisfy SP.
To address the aforementioned problems, we propose a neural serial dictatorship (NeuraSD), a novel neural network that represents a parameterized family of SP matching mechanisms.
NeuralSD is based on serial dictatorship (SD)~\cite{abdulkadiroglu1998random,satterthwaite1981strategy}, an SP mechanism in which agents are sequentially ordered according to a ranking, and each agent selects their most preferred unmatched partner or remains single.
We treat the agent ranking in SD as a learnable parameter, and develop NeuralSD to output a matching outcome by applying the SD procedure to a given preference profile and agents' contextual information.
To generate the ranking in a context-aware manner while satisfying SP, we introduce a sub-network that computes the ranking solely based on agents’ contexts. 
This ensures SP under the assumption that contexts are not manipulated, which is reasonable in socially sensitive settings.
This sub-network uses an attention-based architecture~\cite{vaswani2017attention} to make it invariant to the number of agents.
Because SD involves discrete operations that are non-differentiable, we develop a differentiable relaxation called tensor serial dictatorship (TSD), which simulates the SD process using a sequence of tensor operations.
During training, NeuralSD uses the sub-network to estimate rankings from contexts and applies TSD to output matchings that align with example-based supervision.
At inference time NeuralSD applies the original SD algorithm with rankings estimated from agent contexts. 

We conducted experiments to learn matching mechanisms from examples while satisfying SP.
We generated examples by existing mechanisms using synthetic agent preferences and agent contexts to obtain matching outcomes. 
We used random serial dictatorship (RSD) as a baseline, which is a variant of SD with a randomly selected ranking and well-known SP mechanism~\cite{abdulkadiroglu1998random}.
To evaluate the learning performance via examples, we employed discrepancies between the predicted matchings and original matchings.
Furthermore, we used other relevant metrics to verify that NeuralSD can reflect the implicit goodness encoded in the example matchings.
In small-scale experiments, we also evaluated the optimality of the ranking estimation in NeuralSD and confirmed that NeuralSD generates better rankings than RSD.
For larger numbers of agents, NeuralSD outperformed baselines in terms of matching discrepancies and other metrics for evaluating goodness of estimated matching outcomes. 

The contributions of this study are as follows.
(i) We propose TSD, which accurately executes SD using tensor operations and enables us to back-propagate the loss between the predicted and target matchings.
(ii) We developed a neural network called NeuralSD to represent a parameterized family of SP matching mechanisms, by combining TSD with an attention-based sub-network to reflect the agent contexts in the resulting matching.
NeuralSD is context-aware, rigorously SP, and can learn from datasets with any number of agents to perform matchings at any scale.
(iii) We conducted experiments to learn matching mechanisms from examples. 
The results show that estimating agent rankings from public contextual information leads to better matching outcomes than using random rankings.
This demonstrates that NeuralSD can recover mechanisms through the example matchings that reflect the implicit notions of goodness for outcomes.


\section{Related Work}
\label{sec:related_work}

Prior works have designed various matching mechanisms to satisfy certain theoretical conditions, other than SP and SD mechanism.
For example, individual rationality (IR) is a standard condition, which means that no agent is matched with someone who is worse than remaining unmatched~\cite{roth1990two-sided}.
Stability, which ensures IR and that no pair of agents would prefer to rematch with each other, is also a well-studied condition~\cite{roth1990two-sided}.
The deferred acceptance (DA) mechanism has been designed to satisfy stability~\cite{gale1962college}.

Our research contributes to the field of {\em automated mechanism design} (AMD), which has been first introduced by \citet{conitzer2002complexity,conitzer2004self} to automatically solve mechanism design problems.
Specifically, AMD focuses on optimizing the expectations of functions that explicitly formulate social objectives~\cite{conitzer2002complexity,conitzer2004self,sandholm2003automated}, and most subsequent research has followed this framework.
For instance, in auction settings, one of the primary social objectives is revenue~\cite{conitzer2004self,curry2023differentiable,duan2023scalable,duan2022context,duetting2019optimal,feng2018deep,peri2021preferencenet,rahme2021permutation,sandholm2015automated,shen2019automated}.
Other social objectives are also considered, such as negative sum of agents' costs \cite{golowich2018deep} in facility location, social welfare \cite{wang2020mechanism} in public project settings, and worst-case efficiency loss \cite{guo2015social} in general design settings.

In the field of differentiable economics \cite{duetting2019optimal}, researchers have designed neural networks to solve AMD problems.
Pioneered by \citet{duetting2019optimal}, who proposed RegretNet, a line of AMD research has focused on solving revenue-optimal auction problems using neural networks~\cite{curry2023differentiable,feng2018deep,peri2021preferencenet,rahme2021permutation,shen2019automated}; however, the current study focuses on two-sided matching problems.
Neural networks that consider contextual information in auction settings have been proposed~\cite{duan2022context,duan2023scalable}, but none of them consider models for two-sided SP matching mechanisms.
\citet{ravindranath2023deep} have proposed a neural network for two-sided matching problems to explore the trade-offs between SP and stability, but their model was not rigorously SP, did not incorporate contextual information, and only worked with a fixed number of agents.
\citet{curry2022learning} have investigated differentiable matchings in neural network-based auction design, but in contrast to our approach, their matchings did not incorporate contextual information.

While the approaches and neural network architectures discussed above aim to search for new mechanisms by optimizing an explicit objective, our focus is on designing mechanisms that reflect implicit goodness for matching outcomes while still satisfying strategy-proofness (SP), rather than optimizing a simple social objective.
To this end, we adopt the supervised learning framework proposed by~\citet{narasimhan2016automated} and~\citet{narasimhan2016general}, which allows us to avoid the need to explicitly specify such objectives.

\section{Background}
\label{sec:background}
We use $[n]$ to denote a set $\{1,\dots, n\}$ for any $n \in \mathbb{N}$. 
We represent a row vector as $\bm{x}= [x_1, \dots, x_d]$, a matrix as $\bm{X} = [X_{i,j}]_{ij}$, and a three-dimensional tensor as $\mathbf{X} = [X_{i,j,k}]_{i,j,k}$.

{\bf Matching Instance.}
We focus on one-to-one, two-sided matchings.
We refer to each side as `workers' and `firms' \cite{ravindranath2023deep}.
Let $W := \{w_1,\dots, w_n\}$ and $F := \{f_1,\dots, f_m\}$ denote the sets of workers and firms, respectively.
$W \cup F$ forms the set of {\em agents}. 
As in several applications~\cite{aygun2021college,dur2020explicit,sun2023daycare}, each agent $a\in W\cup F$ is equipped with $d$-dimensional publicly available {\em contexts}, which represent attributes of agents, such as gender, socioeconomic status, and other relevant characteristics that are publicly known.
We denote the contexts of workers and firms by $\bm{X}_W\in \R^{n\times d}$ and $\bm{X}_F\in\R^{m\times d}$, respectively.

Each agent has a preference over agents in the other side as well as the {\em unmatch} option.
We use $\perp$ to indicate the {\em unmatch} option and define $\overline{W} := W \cup \{\perp\}$ and $\overline{F} := F \cup \{\perp\}$.
A preference profile is denoted by ${\succcurlyeq} := (\succcurlyeq_{w_1},\dots, \succcurlyeq_{w_n},\succcurlyeq_{f_1},\dots, \succcurlyeq_{f_m})$, which is a tuple of the preferences of all the agents.
For a worker $w_i$, the preference $\succcurlyeq_{w_i}$ is a linear order, that is, a complete, transitive, and anti-symmetric binary relation over $\overline{F}$.\footnote{
    A {\em binary relation} $R$ over a set $X$ is a subset of $X \times X$. 
    We define $xRy\iff (x,y)\in R$, as per convention.
    The binary relation $R$ is {\em complete} if for any pair $(x,y) \in X \times X$, either $xRy$ or $yRx$. 
    $R$ is {\em transitive} if, for all $x, y, z \in X$, the conditions $xRy$ and $yRz$ imply that $xRz$. 
    $R$ is {\em anti-symmetric} if, for all $x, y \in X$ where $x \neq y$, $xRy$ implies $\lnot (yRx)$. 
    Note that if $R$ is complete, then $xRx$ for all $x \in X$.
}
Similarly, for a firm $f_j\in F$, $\succcurlyeq_{f_j}$ is a preference over $\overline{W}$.
Note that for all agents $a, a'$, we have $a' \succcurlyeq_a a'$ because $\succcurlyeq_a$ is a complete binary relation.
$P_{W}$ and $P_{F}$ denote the sets of all possible preferences over $\overline{F}$ and $\overline{W}$, respectively.
We define $P := (\prod_{i=1}^n P_{W})\times (\prod_{j=1}^m P_{F})$ as the set of all possible preference profiles.
For a preference profile ${\succcurlyeq}\in P$, we say that an agent $a$ {\em strictly prefers} $b$ over $b'$ under the preference $\succcurlyeq_{a}$ if $b\neq b'$ and $b \succcurlyeq_{a} b'$.
We denote $b \succ_{a} b'$ if $a$ strictly prefers $b$ over $b'$
We denote by $\succcurlyeq_{-a}$ the preference profile excluding the agent $a\in W\cup F$ from a profile $\succcurlyeq$, and $(\succcurlyeq_a', \succcurlyeq_{-a}) := (\succcurlyeq_{w_1}, \dots, \succcurlyeq_a', \dots, \succcurlyeq_{f_m})$ denotes the preference profile obtained by replacing $\succcurlyeq_a$ with $\succcurlyeq_a'$.

We represent a matching problem by an {\em instance} $I$.
An instance is denoted by a tuple $I := \langle W_I, F_I, \bm{X}_{W_I}, \bm{X}_{F_I} \rangle$, which consists of the set of workers $W_I$, the set of firms $F_I$, the public contextual information $\bm{X}_{W_I}$ of the workers, and the public contextual information $\bm{X}_{F_I}$ of the firms.
For simplicity, we omit the subscript $I$ from components (e.g., $W$, $F$, $\bm{X}_W$, $\bm{X}_F$) unless specified.
We denote the set of all possible instances by $\mathcal{I}$.
We define the set of all preference profiles across all instances as $\mathcal{P} := \cup_{I\in\mathcal{I}} P_I$, where $P_I$ is the set of all possible preference profiles for a given instance $I$.

{\bf Matching.}
Given an instance with a set of agents $W\cup F$, we represent a matching by a function $\mu: W\cup F\to\overline{W}\cup\overline{F}$.
A function $\mu$ is a {\em matching} if, for all workers $w_i, w_{i'} \in W$ with $i \neq i'$ and all firms $f_j, f_{j'} \in F$ with $j \neq j'$, the following conditions hold:
\begin{enumerate}[(1)]
    \item Every agent is either matched to a partner or remains unmatched: $\mu(w_i)\in \overline{F}$ and $\mu(f_j)\in \overline{W}$;
    \item One-to-one matching for workers: if $\mu(w_i)\neq \perp$, then $\mu(w_i) \neq \mu(w_{i'})$ and $\mu(\mu(w_i)) = w_i$;
    \item One-to-one matching for firms: if $\mu(f_j)\neq \perp$, then $\mu(f_j) \neq \mu(f_{j'})$ and $\mu(\mu(f_j)) = f_j$.
\end{enumerate}
We say that an agent $\mu(a)$ is the {\em partner} of an agent $a$.
Following \citet{ravindranath2023deep}, we also represent a matching $\mu$ by a binary matrix $\bm{M}$.\footnote{
  This is identical to the binary version of the marginal probability matrix by \citet{ravindranath2023deep}.
}
In other words, $M_{ij} = 1$ if and only if $\mu(w_i) = f_j$, and $M_{ij}=0$ otherwise. 
Therefore, each row and column of $\bm{M}$ has a sum to $1$, except for the last row and column; that is, $\sum_{i=1}^{n+1}M_{ij} = 1$ for all $j\in [m]$ and $\sum_{j=1}^{m+1} M_{ij} = 1$ for all $i \in [n]$.
We fix $M_{n+1,m+1} = 0$.
If a matrix $\bm{M}$ represents a matching, then we call it a {\em matching matrix} or simply a matching.
$\mathcal{M}_I$ denotes the set of all possible matchings given the instance $I$.

We define several fundamental properties related to matchings.
Let $\succcurlyeq$ be a preference profile.
A matching $\mu$ is {\em individually rational} (IR) under the preference profile $\succcurlyeq$ if no agent is matched with someone who is worse than the unmatch option: for all agents $a\in W\cup F$, $\mu(a) \succcurlyeq_a \perp$.
A matching $\mu$ is {\em stable} if it is IR and no pair of agents would prefer to rematch with each other.
A pair $(w_i, f_j) \in W \times F$ is a {\em blocking pair} under a matching $\mu$ and a preference profile $\succcurlyeq$ if both agents prefer each other to their matched partners under $\mu$, that is, 
$f_j \succ_{w_i} \mu(w_i)$ and $w_i \succ_{f_j} \mu(f_j)$.
Then, a matching is stable with respect to the preference profile $\succcurlyeq$ if it is IR and has no blocking pairs under the profile.
A matching $\mu'$ {\em Pareto dominates} $\mu$ according to the preference profile $\succcurlyeq$ if $\mu'$ makes all the agents better off that $\mu$ with respect to $\succcurlyeq$: for every agent $a \in W \cup F$, $\mu'(a) \succcurlyeq_a \mu(a)$, and for at least one agent $a' \in W \cup F$, $a'$ strictly prefers $\mu'(a')$ to $\mu(a')$, i.e., $\mu'(a') \succ_{a'} \mu(a')$.
A matching $\mu$ is {\em Pareto efficient} with respect to $\succcurlyeq$ if no other matching $\mu'$ {\em Pareto dominates} $\mu$ according to the profile $\succcurlyeq$.
We consider Pareto efficiency with respect to both sides of the agents, as is commonly done in the literature~\cite{erdil2017two-sided,roth1990two-sided}.

{\bf Matching Mechanism.}
A {\em matching mechanism} is a function $g$ that maps an instance $I$ and a preference profile ${\succcurlyeq}\in P_I$ to a matching $g(I,\succcurlyeq)\in\mathcal{M}_I$.
For notational convenience, we denote the partner of an agent $a$ by $g(I, \succcurlyeq)_a$ instead of $g(I, \succcurlyeq)(a)$.

Given a matching mechanism, agents strategically make preference reports and then a matching outcome is determined.
Agents are assumed to be rational and self-interested, concerned solely with whom they match. 
Formally, given an instance $I$, the payoff function $u_a:\mathcal{M}_I\to\R$ of an agent $a$ is defined as $u_a(\mu) := \#\{b\mid \mu(a)\succcurlyeq_{a} b\}$.
Because we consider the agents' contexts $\bm{X}_W$ and $\bm{X}_F$ are public, we assume that agents cannot misreport their contexts.
On the other hand, agents can misreport their preferences to be matched with more desirable partners.
Formally, the set of possible preference reports for $w_i$ is $P_W$, i.e., the set of all preferences over $\overline{F}$.
Similarly, the set of possible preference reports for each firm $f_j$ is $P_F$, i.e., the set of all preference over $\overline{W}$.
Based on the matching mechanism $g$, we assume that agents behave according to the following scenario:
\begin{enumerate}[(1)]
    \item An instance $I$ realizes from $\mathcal{I}$, and both the mechanism designer and agents observe the set of agents $W\cup F$ as well as the agents' public contextual information $\bm{X}_{W}$ and $\bm{X}_{F}$. 
    \item Simultaneously, each agent $a$ observes their own preference $\succcurlyeq_{a}$ over agents in the opposite side.
    Preferences are private information of agents and not observable by the mechanism designer.
    \item Agents report their own preferences to form a profile $\succcurlyeq'\in P_I$.
    \item The mechanism designer collects $\succcurlyeq'$ to determine a matching outcome as $g(I, \succcurlyeq')\in \mathcal{M}_I$.
\end{enumerate}
Specifically, $g$ may depend on the public contextual information $\bm{X}_W$ and $\bm{X}_F$ to yield better matching outcomes.

We now define several conditions for a matching mechanism.
Because agents can misreport their preferences to obtain more favorable partners, a matching mechanism should be designed to prevent such manipulation.
A matching mechanism $g$ is {\em strategy-proof} (SP) if it is a dominant strategy for each agent to truthfully report his or her preferences.
Formally, a matching mechanism $g$ is SP if for all instances $I = \langle W, F, \bm{X}_W, \bm{X}_F\rangle\in\mathcal{I}$, for all agent $a\in W\cup F$, and for all untruthful preference reports $\succcurlyeq_a'\in P_{W}\cup P_{F}$, no agent can benefit from misreporting their preferences, defined as follows:
\begin{align}
    g(I, (\succcurlyeq_a, \succcurlyeq_{-a}))_a \succcurlyeq_{a} g(I, (\succcurlyeq_a', \succcurlyeq_{-a}))_a.\label{eq:def_strategy_proof}
\end{align}
Importantly, we consider SP only for the reports and not for the public contexts, because we assume that the contextual information are public and known to the mechanism designer.
A matching mechanism is {\em IR} or {\em stable} if it always outputs IR or stable matchings with respect to the reports, respectively.
A mechanism is {\em Pareto efficient} if every output is a Pareto efficient matching with respect to the reports.
We consider IR, stability and Pareto efficiency with respect to the reported preference profile, as commonly studied in the literature~\cite{roth1990two-sided}. 

\section{Problem Setting}
\label{sec:problem_setting}
We consider the problem of estimating a desirable matching mechanism from examples provided by a mechanism designer, under the constraint that the learned mechanism must satisfy SP.

To formalize this setting, we assume access to a dataset of example matchings generated by ad-hoc mechanisms or empirical rules, which are assumed to reflect the practitioner’s implicit objectives.
Formally, we assume access to a dataset of example matchings $S := \{((I^\ell, {\succcurlyeq^\ell}), \bm{M}^\ell)\}_{\ell=1}^L$, where each example consists of an instance $I^\ell$, a preference profile $\succcurlyeq^\ell$, and the corresponding matching matrix $\bm{M}^\ell$.
We assume that each matching satisfies $\bm{M}^\ell = g(I^\ell, \succcurlyeq^\ell)$, where $g$ is an unknown mechanism corresponding to an ad-hoc or empirical rule.
That is, these examples are assumed to encode the designer’s practical demand for matching outcomes which are diffucult to formalize explicitly, such as ethical requirements~\cite{li2017ethics,sonmez2022market}.
We further assume that instances and preference profiles are independently and identically distributed samples from an unknown distribution over $\mathcal{I}\times\mathcal{P}$.
Because the mechanism designer's notion of goodness for matching outcomes is not explicitly specified, the corresponding mechanism $g$ cannot be expressed in closed form.

Such a dataset $S$ can be constructed from a history of past matchings~\cite{narasimhan2016automated}.  
In this setting, a mechanism designer collects records of past matchings, where each record consists of an instance $I^\ell$ and a reported preference profile $\succcurlyeq^\ell$, forming the input pair $(I^\ell, \succcurlyeq^\ell)$.  
The associated matching outcome $\bm{M}^\ell$ can either be the actual outcome produced in the past, or a new outcome that the designer now considers more desirable for the same input.  
This approach enables the designer to convey their implicit matching objectives through examples, even without a formal specification.

Our goal is to design a matching mechanism that reflects the implicit goodness of matching outcomes expressed in the example matchings while ensuring SP. 
To this end, given the example matchings $S$, we aim to estimate a matching mechanism $f_{\bm{\theta}^*}$ that approximates the underlying mechanism $g$ through the examples, by searching over a parameterized family of SP mechanisms $\mathcal{F} := \{f_{\bm{\theta}} \mid \bm{\theta} \in \bm{\Theta}\}$.  
We formulate this as the following optimization problem:
\begin{align}
\bm{\theta}^* = \argmin_{\bm{\theta} \in \bm{\Theta}} \sum^L_{\ell=1} \mathcal{L}(f_{\bm{\theta}}(I^\ell, \succcurlyeq^\ell), \bm{M}^\ell),
\label{eq:search_over_theta}
\end{align}
where $\mathcal{L}$ is a discrepancy function between two matching matrices, and $\bm{\Theta}$ is the parameter space.

Our goal is to obtain an SP matching mechanism that best approximates the example matchings.  
To this end, We use a family $\mathcal{F}$ of SP matching mechanisms to approximate the example outcomes, which reflect an implicit notion of goodness for matching outcomes.
Because we cannot obtain a complete characterization of all SP matching mechanisms, we construct $\mathcal{F}$ as a subset of all SP mechanisms, as motivated by \citet{narasimhan2016automated}.
Note that we focus on $\mathcal{F}$ that includes mechanisms that can learn from $S$ and excludes non-adaptive mechanisms, such as one that first estimates preferences from contexts and then applies a fixed algorithm like deferred acceptance (DA)~\cite{gale1962college}, to the estimated preferences while ignoring reports.
In addition, $S$ can be constructed under the assumption that the reported preferences are truthful because we consider SP mechanisms.
During inference, SP ensures that agents report preferences truthfully, and hence, the model $f_{\bm{\theta}^*}$ trained on $S$ will yield an approximately desirable matching outcome.

\section{Proposed Method}
\begin{figure*}
    \centering
    \includegraphics[width=1.0\linewidth]{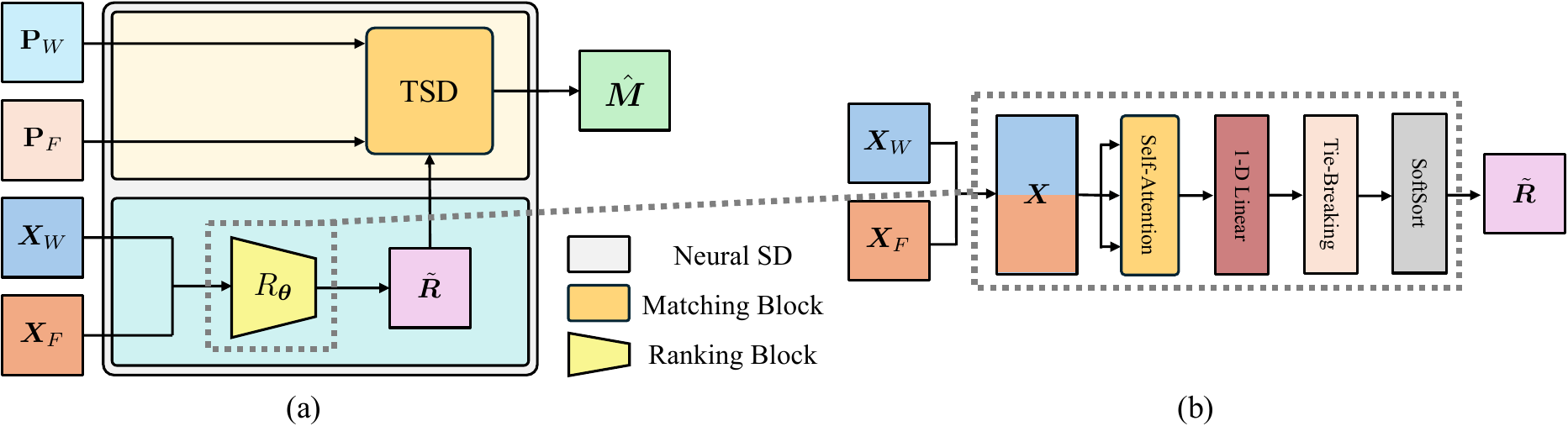}
    \caption{
        NeuralSD architecture. 
        (a) Overview of the entire architecture. 
        The orange block labeled `TSD' represents the matching block, which internally executes TSD.
        The block labeled `$\bm{R}_{\bm{\theta}}$' is the ranking block.
        (b) Detailed structure of the ranking block.
    }
    \label{fig:neuralsd}
\end{figure*}
\label{sec:proposed_method}

To solve the problem in Equation~\eqref{eq:search_over_theta}, we construct a parametrized family of SP matching mechanisms $\mathcal{F}$ based on serial dictatorship (SD)~\cite{abdulkadiroglu1998random,satterthwaite1981strategy}, one of the SP matching mechanisms. 
In the procedure of SD, agents are ordered sequentially with respect to a ranking and each agent either selects their most preferred unmatched partner or remains single. 
Because SD depends on the agent ranking, we propose to model $f_{\bm{\theta}}$ in Equation~\eqref{eq:search_over_theta} through a neural network where $\bm{\theta}$ corresponds to agent rankings.
Concretely, we make SD learnable by a neural network that first parametrically computes an agent ranking from agents' public contextual information, and then outputs a matching outcome based on SD and the ranking.
This architecture satisfies SP because the ranking solely depends on the public contexts.
On the other hand, SD cannot be directly used in a neural network due to discrete operations. 
Therefore, we first make the procedure of SD differentiable with respect to the agent ranking so that we can use a standard gradient-based learning method. 
Then, we develop a sub-network that parametrically computes the agent ranking.

We first describe SD and its properties and introduce TSD for executing SD using tensor operations. 
We then develop NeuralSD along with the parametric computation of agent rankings.

\subsection{Serial Dictatorship (SD)}
\label{sec:serial_dictatorship}
We describe SD in two-sided matching.
SD is widely applied for one-sided matching, but also can be considered in two-sided matching~\cite{ravindranath2023deep}.
We represent an agent order as a {\em ranking} $\bm{r} = (r_1,\dots, r_{n+m})$, where $r_k = a$ indicates that agent $a$ is assigned to rank $k$. 
Given a ranking $\bm{r}$ and reports $\succcurlyeq$, SD proceeds as follows:
\begin{enumerate}[(1)]
\item Initialize the matching $\mu$ to be empty.
\item For each round $k = 1, \dots, n+m$, if $r_k$ is not yet matched in $\mu$, then assign $\mu(r_k)$ to either the most preferred unmatched agent under $\succcurlyeq$ or to the unmatch option $\perp$, particularly if the remaining agents are not strictly preferred to $\perp$.
The agent $r_k$ cannot designate agents that are already matched to $\perp$.
\end{enumerate}

SD satisfies SP and Pareto efficiency.
\citet{ravindranath2023deep} did not formally prove that SD is both SP and Pareto efficient; therefore, we offer the following statement. 
This is similar to the result by \citet{imamura2024matching}, but they considered one-sided matching, while we consider two-sided settings.
\begin{proposition}
    SD based on an agent ranking $\bm{r}$ is both SP and Pareto efficient for all rankings $\bm{r}$.\label{prop:sd}
\end{proposition}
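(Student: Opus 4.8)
The plan is to prove the two properties separately, both by direct arguments on the SD procedure.

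For Pareto efficiency, I would argue by induction on the rank position $k$ that in any matching $\mu'$ that weakly dominates the SD outcome $\mu$, every agent $r_k$ in fact receives the same partner under $\mu'$ as under $\mu$. The base case is $r_1$: this agent picks its globally most preferred available partner (the most preferred agent on the other side, or $\perp$ if nothing beats $\perp$), so $\mu(r_1) \succcurlyeq_{r_1} \mu'(r_1)$ for \emph{any} matching $\mu'$; since a dominating $\mu'$ must have $\mu'(r_1) \succcurlyeq_{r_1} \mu(r_1)$, anti-symmetry forces $\mu'(r_1) = \mu(r_1)$. For the inductive step, assuming $r_1,\dots,r_{k-1}$ get identical partners under $\mu$ and $\mu'$, the set of agents on the other side still unmatched at round $k$ is the same in both matchings; $r_k$ under SD picks its favorite among these (or $\perp$), so again $\mu(r_k) \succcurlyeq_{r_k} \mu'(r_k)$ (I need the small observation that if $r_k$ was already matched by the time its round comes up, it was matched as the \emph{choice} of an earlier agent, and the same holds under $\mu'$ by the induction hypothesis). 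Anti-symmetry then gives $\mu'(r_k) = \mu(r_k)$. Hence $\mu' = \mu$, so no matching strictly dominates $\mu$.

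For strategy-proofness, fix an agent $a = r_k$ and a true profile $\succcurlyeq$. Run SD under $(\succcurlyeq_a, \succcurlyeq_{-a})$ and under $(\succcurlyeq_a', \succcurlyeq_{-a})$. Since $a$'s report is only consulted at round $k$ (and only to determine $a$'s own pick), the execution of the two runs is identical through round $k-1$; in particular the set $A$ of agents on the other side still available at the start of round $k$ is the same in both runs, and whether $a$ is already matched is the same. If $a$ is already matched at round $k$, its partner is determined entirely by earlier agents' picks and $\succcurlyeq_{-a}$, so it is unaffected by $a$'s report. Otherwise, under truthful reporting $a$ receives $\mathrm{argmax}_{\succcurlyeq_a}(A \cup \{\perp\})$, which is weakly preferred under $\succcurlyeq_a$ to \emph{any} element $a$ could end up with; in particular, whatever partner $a$ obtains in the misreport run lies in $A \cup \{\perp\}$ (it must be available at round $k$ or be $\perp$, since $a$ cannot take an already-matched agent and cannot later improve once its round passes), so it is weakly dominated by the truthful pick. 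This establishes inequality~\eqref{eq:def_strategy_proof}.

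The main obstacle is the bookkeeping in the claim ``whatever $a$ obtains in the misreport run is available at round $k$ or is $\perp$'': one must rule out the scenario where misreporting causes $a$ to be left unmatched at round $k$ but then picked up by a later agent with a strictly preferred partner. This cannot happen because in SD an agent's partner is only ever set during that agent's own round, never by being chosen by someone later — a later agent $r_{k'}$ with $k' > k$ chooses \emph{its} partner, which becomes matched to $r_{k'}$, not the reverse assignment to $a$. Making this precise (SD only assigns $\mu(r_k)$ at round $k$, and the one-to-one/involution structure of $\mu$ is maintained throughout) is the one place requiring care; everything else follows from the greedy structure together with anti-symmetry of the preferences.
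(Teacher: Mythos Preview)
Your proposal is correct and follows essentially the same strategy as the paper: for Pareto efficiency you give an explicit induction on the rank showing any weakly dominating $\mu'$ must coincide with $\mu$ (the paper phrases the same idea as a contradiction argument split over active versus passive agents), and for strategy-proofness both arguments rest on the observation that the option set available to $a=r_k$ at round $k$ is fixed by $\succcurlyeq_{-a}$ and the ranking alone. Your induction is arguably cleaner than the paper's informal ``similarly for any active agent'' step, and the obstacle you flag in the last paragraph is in fact a non-issue under the paper's SD definition, since at round $k$ the active agent must pick something (possibly $\perp$) and is thereafter ineligible to be designated by any later agent.
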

The proof is provided in Appendix~\ref{apdx:prop_sd}.
Note that $\bm{r}$ should be determined before collecting reports to satisfy SP.

SD is not IR because an agent could be matched with one less desirable than $\perp$.
Therefore, we quantify the extent to which SD fails to be IR by using IR violation (IRV) proposed by~\citet{ravindranath2023deep}. 
For a matching matrix $\bm{M}$ and reports $\succcurlyeq$, IRV is denoted as $irv(\bm{M}, \succcurlyeq)$ and ranges from $0$ (IR under $\succcurlyeq$) to $1$.
The definition is provided in Appendix~\ref{apdx:experimental_details}.
We show the upper bound for IRV of SD and provide the proof in Appendix~\ref{apdx:prop_irv_bound_sd}.
\begin{proposition}
    Let $I$ be an instance and $\bm{M}_{\bm{r}}$ be a matching matrix of SD on an arbitrary agent ranking $\bm{r}$.
    Then, $\max_{I\in\mathcal{I}}\{irv(\bm{M}_{\bm{r}},\succcurlyeq_I)\} \le 1/2$ for all rankings $\bm{r}$.
    \label{prop:irv_upper_bound_sd}
\end{proposition}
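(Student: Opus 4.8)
The plan is to exploit the asymmetric structure of SD: in every matched pair one agent acted as a ``dictator'' (picked the partner at its own round) while the other was merely ``received,'' and a dictator never ends up worse off than $\perp$. Fix an arbitrary instance $I$, a report profile $\succcurlyeq$, and a ranking $\bm{r}$, and let $\mu$ be the matching produced by SD with matrix $\bm{M}_{\bm{r}}$. (I will prove the bound for each fixed $I$ and $\succcurlyeq$, which yields the stated maximum over $\mathcal{I}$.)

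First I would establish a structural lemma. Walking through rounds $k = 1,\dots,n+m$, agent $r_k$ is either (a) already matched when round $k$ begins — call it a \emph{receiver} — or (b) unmatched and therefore acts — call it a \emph{dictator}, which assigns itself a real partner or $\perp$. I claim each matched pair $(w_i,f_j)\in\mu$ contains exactly one dictator and one receiver, with the earlier‑ranked agent being the dictator. Indeed, WLOG $\mathrm{rank}(w_i)<\mathrm{rank}(f_j)$; at $w_i$'s round $w_i$ must be unmatched, since otherwise it was matched by a firm ranked before it and stays so permanently, contradicting $\mu(w_i)=f_j$; and given $\mu(w_i)=f_j$, at that round $w_i$ must choose $f_j$, because choosing $\perp$ would leave it permanently unmatched and choosing another firm would leave it permanently matched to that firm. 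Then $f_j$ is already matched at its own round and never acts, so it is a receiver. The point for IR is the selection rule: a dictator picks a real partner only when that partner is strictly preferred to $\perp$, and picks $\perp$ otherwise, so $\mu(r_k)\succcurlyeq_{r_k}\perp$ for every dictator $r_k$; and a $\perp$‑dictator satisfies $\perp\succcurlyeq_{r_k}\perp$ by completeness of $\succcurlyeq_{r_k}$.

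Hence the only agents whose IR constraint $\mu(a)\succcurlyeq_a\perp$ can fail are receivers, and receivers are in bijection with matched pairs, so their number is exactly the number of matched pairs, which is at most $\min(n,m)\le (n+m)/2$. I would then plug this into the definition of $irv$ (Appendix~\ref{apdx:experimental_details}): since $irv(\bm{M},\succcurlyeq)$ is a normalized aggregate of per‑agent IR deficiencies — each nonnegative, at most $1$, and equal to $0$ whenever that agent weakly prefers its assigned partner to $\perp$ — it is bounded by the corresponding normalized mass carried by the receivers. Because at most half of the $n+m$ agents are receivers, this mass is at most $1/2$, giving $irv(\bm{M}_{\bm{r}},\succcurlyeq)\le 1/2$, as desired.

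I expect the main obstacle to be a matter of care rather than depth: making the structural lemma airtight against every corner case of the $\perp$ option (agents matched to $\perp$ only by their own choice, dictators left with no acceptable partner, receivers matched strictly before their turn and then frozen), and then lining up this clean combinatorial count with the precise normalization of $irv$ used in the appendix so that the ``$\le 1/2$'' is literal rather than only up to constants.
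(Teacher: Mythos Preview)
Your dictator/receiver decomposition is exactly the paper's active/passive split, and the structural claim --- only receivers can violate IR, and there are at most $\min(n,m)$ of them, one per matched pair --- is precisely what the paper uses.

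The place you flag as the obstacle is a real one, and your proposed heuristic there will not survive contact with the actual definition. The appendix displays $irv$ only for the experimental $n=m$ case; in general the two sums carry \emph{asymmetric} normalizers --- firm-side deficiencies are divided by $2n$ and worker-side by $2m$ --- so $irv$ is not a uniform average over the $n+m$ agents, and ``receivers $\le (n+m)/2$'' does not translate into $irv\le 1/2$. Concretely: take $n=100$, $m=1$, let the single firm dictate and pick $w_1$, who ranks it below $\perp$. There is one receiver among $101$ agents, yet $irv=\tfrac{0}{2\cdot100}+\tfrac{1}{2\cdot1}=\tfrac12$, so the per-agent-average intuition is off by a factor of $\sim 50$.

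The fix uses the tighter count you already stated and then discarded. Writing $\#_w,\#_f$ for the numbers of receiving workers and firms, you have $\#_w+\#_f\le\min(n,m)$. Assuming $n\ge m$ without loss of generality, the paper computes
\[
irv \;\le\; \frac{\#_f}{2n}+\frac{\#_w}{2m}\;\le\; \frac{\#_f+\#_w}{2m}\;\le\;\frac{m}{2m}=\frac12.
\]
So keep the $\min(n,m)$ bound and plug directly into the asymmetric normalizers; the detour through $(n+m)/2$ is a dead end.
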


\subsection{SD as Tensor Operations}
\label{subsec:TSD}
Because SD is a deterministic procedure with several discrete operations, it is non-differentiable and unsuitable as a neural network layer.
To address this issue, we propose tensor serial dictatorship (TSD), a differentiable variant of SD that exactly simulates the original algorithm using tensor operations and enables backpropagation of the errors from example matchings, as in Equation~\eqref{eq:search_over_theta}. 
We outline the key components below, and full pseudo-code is provided in Appendix~\ref{apdx:tensorserialdictatorship}.

TSD takes three tensors as input: two represent the agents' reported preferences, and one represents an agent ranking. 
Let the instance consist of $n$ workers and $m$ firms, with preference profiles $\succcurlyeq$, and a ranking $\bm{r} = (r_1, \dots, r_{n+m})$ over the agents.
The first two tensors, $\mathbf{P}_W \in \{0,1\}^{n \times (m+1) \times (m+1)}$ and $\mathbf{P}_F \in \{0,1\}^{m \times (n+1) \times (n+1)}$, represent the reported preferences of workers and firms, respectively. 
Specifically, $\mathbf{P}_W = [\bm{P}_{w_1},\dots,\bm{P}_{w_n}]$ comprises $n$ matrices, where $\bm{P}_{w_i}=[P_{i,j,k}]_{j,k\in[m+1]}$ is a {\em preference matrix} of the worker $w_i$ that represents the preference $\succcurlyeq_{w_i}$ as:
\begin{align}
    P_{i,j,k} := \mathbb{I}[&\text{$j \leq m$ and $\mathrm{ord}(f_j,\succcurlyeq_{w_i})=k$, or} \notag\\
    &\text{$j = m+1$ and $\mathrm{ord}(\perp,\succcurlyeq_{w_i})=k$}].\label{eq:defP}
\end{align}
Here, $\mathbb{I}$ is the indicator function, and $\mathrm{ord}(\overline{f},\succcurlyeq_{w_i}) := \#\{\overline{f'}\in\overline{F} \mid \overline{f'} \succcurlyeq_{w_i} \overline{f}\}$ is the order of $\overline{f}\in\overline{F}$ in $\succcurlyeq_{w_i}$.
Similarly, we define $\mathbf{P}_F$ comprising the preference matrices of $m$ firms.
We represent preferences as above, but one can refine the representations to reflect preference domain structures.
The third input, $\bm{R}=[R_{i,j}]_{i,j\in[n+m]} \in \{0,1\}^{(n+m) \times (n+m)}$, is a permutation matrix called {\em ranking matrix} that represents the ranking $\bm{r}$ as follows:
\begin{align}
    R_{i,j} := \mathbb{I}[&\text{$1\le i\le n$ and $r_j = w_i$, or}\notag\\
    &\text{$n+1\le i\le n+m$ and $r_j = f_{i-n}$}].\label{eq:defR}
\end{align}

{\bf TSD algorithm.}
The TSD algorithm is a matrix representation of SD that operates with the three tensors $\mathbf{P}_W$, $\mathbf{P}_F$, and $\bm{R}$.
We denote it as a function ${\rm TSD}(\mathbf{P}_W, \mathbf{P}_F, \bm{R})$.
It updates the match step by step based on $\bm{R}$ and removes used options from $\mathbf{P}_W$ and $\mathbf{P}_F$ to prevent repeated matches.
The ${\rm TSD}(\mathbf{P}_W, \mathbf{P}_F, \bm{R})$ algorithm is as follows:
\begin{enumerate}[(1)]
    \item Initialize $\bm{M}$ to be a zero matrix and set $k := 1$.
    \item Extract the $k$-th leftmost column from $\bm{R}$, and use it to compute the preference matrix for the agent $r_k$.
    \item Compute the new match using the $r_k$'s preference matrix $\bm{P}_{r_k}$ and update $\bm{M}$ based on the result.
    \item If $r_k$ is a worker $w_i$, then set all entries in the $i$-th row of $\bm{P}_{f_1}, \dots, \bm{P}_{f_m}$ to zero. 
    If $r_k$ is a firm, apply the same procedure to $\bm{P}_{w_1}, \dots, \bm{P}_{w_n}$.
    \item If $r_k$ is a worker $w_i$ matched with $f_j$, then set the $j$-th row of $\bm{P}_{w_1}, \dots, \bm{P}_{w_n}$ to zero and set $\bm{P}_{f_j}$ to a zero matrix. 
    If $w_i$ is matched with $\perp$, then do nothing. 
    If $r_k$ is a firm, apply the same procedure to $\bm{P}_{f_1}, \dots, \bm{P}_{f_m}$.
    \item If $k = n+m$, output $\bm{M}$. 
    Otherwise, increment $k$ to $k+1$ and return to step (2).
\end{enumerate}
A conceptual visualization of this procedure is shown in Figure~\ref{fig:tsd} of Appendix~\ref{apdx:tensorserialdictatorship}.
We prove that this procedure exactly computes the same matching result as that of SD.
\begin{proposition}
    Consider an instance with $n$ workers and $m$ firms, matched by SD according to the ranking $\bm{r} = $ $(r_1,$ $\dots,$ $r_{n+m})$. 
    If the three tensors $\mathbf{P}_W$, $\mathbf{P}_F$, and $\bm{R}$ are defined as per Equations \eqref{eq:defP} and \eqref{eq:defR}, then TSD outputs the matrix representation of the matching outcome determined by this SD.
    \label{prop:tsd}
\end{proposition}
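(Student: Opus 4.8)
The plan is to prove Proposition~\ref{prop:tsd} by induction on the round index $k$, establishing a correspondence between the internal state of TSD after round $k$ and the internal state of SD after it has processed the first $k$ agents in the ranking. The key is to identify the right invariant. For each $k \in \{0,1,\dots,n+m\}$, let $\mu^{(k)}$ denote the partial matching produced by SD after it has considered agents $r_1,\dots,r_k$, and let $\bm{M}^{(k)}$, $\mathbf{P}_W^{(k)}$, $\mathbf{P}_F^{(k)}$ denote the tensors held by TSD at the start of round $k+1$. I would state the invariant as: (i) $\bm{M}^{(k)}$ is the matching matrix of $\mu^{(k)}$; (ii) for every worker $w_i$, the matrix $\bm{P}_{w_i}^{(k)}$ equals the preference matrix of $w_i$ restricted to the options still available to $w_i$ — that is, entries corresponding to firms already matched by $\mu^{(k)}$, and to $w_i$ itself if $w_i$ is already matched, are zeroed, and all other entries are untouched; (iii) the symmetric statement for firms. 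Because $\perp$ is never ``used up,'' the column $m{+}1$ of a worker's preference matrix is only ever zeroed when that worker becomes matched, which is consistent with step~(4).

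The base case $k=0$ is immediate from the definitions~\eqref{eq:defP} and~\eqref{eq:defR}: no agent has been processed, $\bm{M}^{(0)}$ is the zero matrix, and the preference tensors are exactly those built from $\succcurlyeq$. For the inductive step, assume the invariant holds after round $k$ and analyze round $k+1$, which processes agent $r_{k+1}$ — extracted, as in step~(2), by reading off the $(k{+}1)$-th column of $\bm{R}$, which by~\eqref{eq:defR} indeed identifies $r_{k+1}$. There are two cases. If $r_{k+1}$ is already matched under $\mu^{(k)}$, then by invariant~(ii)/(iii) its preference matrix $\bm{P}_{r_{k+1}}^{(k)}$ has been set to zero (this happens precisely when its partner was processed, via step~(5)), so the ``compute new match'' operation in step~(3) returns $\perp$ / makes no change, and steps~(4)--(5) zero rows/matrices that are already zero; hence $\mu^{(k+1)} = \mu^{(k)}$ and all tensors are unchanged, matching SD's behavior of skipping already-matched agents. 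If $r_{k+1}$ is unmatched, I would argue that step~(3), applied to the restricted preference matrix $\bm{P}_{r_{k+1}}^{(k)}$, selects exactly the most-preferred still-available option under $\succcurlyeq_{r_{k+1}}$ — with the convention that $\perp$ is chosen when no remaining agent is strictly preferred to $\perp$ — which is by definition the choice SD makes in step~(2). This requires checking that the matrix operation ``pick the available entry of smallest $\mathrm{ord}$'' agrees with SD's greedy rule, including the $\perp$ tie-breaking, and that the resulting update of $\bm{M}$ records $\mu(r_{k+1})$ correctly; this is the one genuinely computational point and the correctness of the $\perp$ convention is where I would be most careful. Then steps~(4)--(5) perform exactly the bookkeeping needed to re-establish invariants (ii)/(iii): step~(5) removes the newly matched partner from everyone else's available set and deactivates that partner's own preference matrix, and step~(4) removes $r_{k+1}$ from the other side's available sets — matching the semantics ``this agent and this firm are now off the market.''

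The main obstacle is not the induction skeleton but pinning down precisely what the matrix-level operations in steps~(2)--(5) compute and verifying, line by line against the pseudo-code in Appendix~\ref{apdx:tensorserialdictatorship}, that they implement ``most preferred available option, defaulting to $\perp$'' and the two flavors of row/matrix zeroing. In particular I would need to confirm that after a worker $w_i$ is matched to $f_j$, every reference to $f_j$ (in other workers' matrices) and to $w_i$ (in firms' matrices, already handled in step~(4)) is cleared, and that $f_j$'s own matrix being zeroed is exactly what causes $f_j$ to be skipped when its turn comes — closing the loop with the ``already matched'' case above. Once the per-round state correspondence is verified, taking $k = n+m$ yields $\bm{M}^{(n+m)}$ equal to the matching matrix of $\mu^{(n+m)}$, which is the full SD outcome, completing the proof.
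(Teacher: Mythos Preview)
Your proposal is correct and mirrors the paper's own proof, which also proceeds by a loop-invariant argument over $k$ with the same case split (already/passively matched vs.\ unmatched) that you describe, and which isolates the ``pick the most-preferred remaining option'' step into a separate lemma (the paper's Proposition~\ref{prop:findcounterpart} about \textsc{FindCounterpart}) corresponding exactly to what you flag as ``the one genuinely computational point.'' One refinement worth making: your invariant~(ii) as phrased suggests $\bm{P}_{w_i}$ is fully zeroed whenever $w_i$ is matched, but in the actual algorithm this holds only when $w_i$ was \emph{passively} matched --- the paper handles this by splitting the invariant into ``$\bm{P}_x=\bm{O}$ iff $x$ was passively matched'' and ``rows corresponding to any matched agent are zero in every preference matrix''; your argument survives because you only ever invoke~(ii) on $r_{k+1}$, which if already matched must have been matched passively.
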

Assuming $n\ge m$, without loss of generality, this computation requires $O(n^4)$.
The proof of Proposition~\ref{prop:tsd} and the computational complexity are provided in Appendix~\ref{apdx:tensorserialdictatorship}.
Although the outputs of SD and TSD are identical, we use TSD in our neural network because we cannot directly use SD as a layer because it is non-differentiable and thus unsuitable for gradient-based learning method.

\subsection{NeuralSD}
\label{subsec:neuralsd}
{
\begin{table*}[t]
    \centering
    \caption{
        Mean and std for tests at $n=10,120,200$ with different mechanisms for generating example matchings.
        HD, \#BP, SV, RW, and IRV are the evaluation metrics.
        Arrows show if higher ($\uparrow$) or lower ($\downarrow$) values are better.
        `$\times 10^{-k}$' means values are scaled by $10^{-k}$.
        Bold font indicates the best means.
    }
    \label{tab:results}
    \scalebox{0.8}{
    \begin{tabular}{clrrrrrrrr}
    \toprule
    &&\multicolumn{4}{c}{Deferred Acceptance (DA)}&\multicolumn{2}{c}{Equal-weighted Hungarian (EH)}&\multicolumn{2}{c}{Minority-weighted Hungarian (MH)}\\\cmidrule(lr){3-6}\cmidrule(lr){7-8}\cmidrule(lr){9-10}
    $n$&Model&HD$\downarrow(\times 10^{-1})$&\#BP$\downarrow(\times 10^{-2})$&SV$\downarrow(\times 10^{-2})$&IRV$\downarrow$&HD$\downarrow(\times 10^{-1})$&RW$\uparrow$&HD$\downarrow(\times 10^{-1})$&RW$\uparrow$\\\cmidrule(lr){1-1}\cmidrule(lr){2-2}\cmidrule(lr){3-6}\cmidrule(lr){7-8}\cmidrule(lr){9-10}

    $10$&RSD (baseline)&$4.61\pm 1.23$&$12.3\pm 5.18$&$1.30 \pm 0.815$&$0.00\pm0.00$&$4.56\pm1.17$&$0.918\pm0.0359$&$4.66\pm1.20$&$0.910\pm0.0410$\\
    &NeuralSD (ours)&$\mathbf{4.57\pm 1.25}$&$\mathbf{11.1\pm 5.48}^\ddagger$&$\mathbf{1.04\pm 0.813}^\ddagger$&$0.00\pm0.00$&$\mathbf{4.35\pm1.23}^\ddagger$&$\mathbf{0.930\pm0.0372}^\ddagger$&$\mathbf{4.33\pm1.29}^\ddagger$&$\mathbf{0.925\pm0.0418}^\ddagger$\\\cmidrule(lr){1-1}\cmidrule(lr){2-2}\cmidrule(lr){3-6}\cmidrule(lr){7-8}\cmidrule(lr){9-10}
    $120$&RSD&$5.73\pm0.258$&$10.0\pm1.27$&$0.691\pm0.155$&$0.00\pm0.00$&$5.71\pm0.243$&$0.893\pm0.00999$&$5.74\pm0.224$&$0.884\pm0.0105$\\
    &NeuralSD&$\mathbf{5.68\pm0.280}^\ddagger$&$\mathbf{9.59\pm1.22}^\ddagger$&$\mathbf{0.646\pm0.148}^\ddagger$&$0.00\pm0.00$&$\mathbf{5.44\pm0.284}^\ddagger$&$\mathbf{0.912\pm0.0116}^\ddagger$&$\mathbf{5.48\pm0.272}^\ddagger$&$\mathbf{0.904\pm0.0123}^\ddagger$\\\cmidrule(lr){1-1}\cmidrule(lr){2-2}\cmidrule(lr){3-6}\cmidrule(lr){7-8}\cmidrule(lr){9-10}
    
    $200$&RSD&$5.85\pm 0.193$&$9.36\pm 0.944$&$0.610\pm 0.109$&$0.00\pm0.00$&$5.83\pm0.180$&$0.895\pm0.00756$&$5.84\pm0.180$&$0.888\pm0.00847$\\
    &NeuralSD&$\mathbf{5.80 \pm 0.207}^\ddagger$&$\mathbf{8.96 \pm 0.943}^\ddagger$&$\mathbf{0.571\pm 0.106}^\ddagger$&$0.00\pm0.00$&$\mathbf{5.54\pm0.205}^\ddagger$&$\mathbf{0.915\pm0.00821}^\ddagger$&$\mathbf{5.59\pm0.207}^\ddagger$&$\mathbf{0.906\pm0.00920}^\ddagger$\\\bottomrule
    \addlinespace[1ex]
    \multicolumn{10}{r}{$^\dagger p < 0.05\quad ^\ddagger p < 0.01$}
    \end{tabular}
    }
\end{table*}
}
Using TSD as a building block, we propose NeuralSD, a novel neural network architecture that models a parameterized mechanism $f_{\bm{\theta}}$ to learn a matching mechanism from examples while ensuring SP, as motivated in the problem~\eqref{eq:search_over_theta}.
The architecture of NeuralSD is presented in Figure~\ref{fig:neuralsd}.
NeuralSD extends the SD mechanism by treating the agent ranking as a learnable function of public contextual information. 
Concretely, NeuralSD first parametrically computes an agent ranking from agents' public contextual information, and then outputs a matching outcome based on SD and the ranking.
Because TSD is differentiable, we can optimize the agent ranking via backpropagation. 
For the sub-network, we adopt an attention-based \cite{vaswani2017attention} architecture to compute the ranking, ensuring both context-awareness and scale-invariance to the number of agents.
By unifying this component with TSD, we optimize the parameters through the ranking, thereby constructing a learnable SD that forms a parameterized family of SP mechanisms.
Our approach can satisfy SP by this architecture because we determine the ranking solely based on agents' public contextual information that are assumed to be publicly known a priori.
Through this architecture, NeuralSD represents a parameterized family of SP matching mechanisms in the problem~\eqref{eq:search_over_theta}, which can be trained end-to-end from example matchings.

NeuralSD comprises two components: the {\em ranking block}, which takes the contexts $\bm{X}_W$ and $\bm{X}_F$ and outputs a ranking matrix $\tilde{\bm{R}} \in \mathbb{R}^{(n+m) \times (n+m)}$, and the {\em matching block}, which runs TSD to output a matching matrix $\hat{\bm{M}} = {\rm TSD}(\mathbf{P}_W, \mathbf{P}_F, \tilde{\bm{R}})$.

The ranking block $\bm{R}_{\bm{\theta}}$ is an attention-based architecture~\cite{vaswani2017attention}, defined as 
$
    \tilde{\bm{R}} = 
    \bm{R}_{\bm{\theta}}(\bm{X}_W,\bm{X}_F) := \mathrm{SoftSort}_\tau \circ \mathrm{TieBreak} \circ \mathrm{Linear} \circ \mathrm{SelfAttn}(\bm{X}),
$
where $\bm{X} := [\bm{X}_W; \bm{X}_F]$ represents the concatenation of worker contexts $\bm{X}_W$ and firm contexts $\bm{X}_F$. 
The function $\mathrm{SelfAttn}(\bm{X}) := \mathrm{Attention}(\bm{X}\bm{W}^Q, \bm{X}\bm{W}^K, \bm{X}\bm{W}^V)$ applies self-attention \cite{vaswani2017attention} by parameters $\bm{W}^Q, \bm{W}^K, \bm{W}^V\in\R^{d\times d_{\rm emb}}$, where $d$ is the dimension of contexts and $d_{\rm emb}$ is the embedding dimension. 
The linear transformation $\mathrm{Linear}(\bm{A}) := \bm{w}\bm{A}^\top + b$ transforms $\bm{A}$ into a one-dimensional vector by parameters $\bm{w}\in\R^{d_{\rm emb}}$ and $b\in\R$. 
The tie-breaking function $\mathrm{TieBreak}(\bm{a}) := \bm{a} + \mathrm{rank}(\bm{a})$ adds the rank of vector $\bm{a}$, where $\mathrm{rank}(\bm{a}) := [\#\{j\mid \text{$a_j < a_i$ or $a_j = a_i$ and $j < i$} \}]_i$.\footnote{
    Actually, this function is not strictly differentiable with respect to $\bm{a}$.
    Nevertheless, we detached the term $\mathrm{rank}(\bm{a})$ from the computational graph, effectively treating the function as though it simply adds a constant.
} 
Finally, $\mathrm{SoftSort}_\tau(\bm{a})$ is defined as $\mathrm{softmax}\left(\frac{-|\mathrm{sort}(\bm{a})^\top\mathds{1} - \mathds{1}^\top\bm{a}|}{\tau}\right)$, where $|\bm{A}| := [|A_{i,j}|]_{i,j}$ is the element-wise absolute value; $\mathrm{softmax}$ is applied row-wise and $\mathds{1}$ denotes the all-one vector with the same dimension as that of $\bm{a}$.
This provides a differentiable approximation of the $\texttt{argsort}$ operator \cite{prillo2020softsort}.
If there are ties in $\bm{a}$, SoftSort cannot detect them, causing the output $\tilde{\bm{R}}$ to deviate from a permutation matrix.
Therefore, the tie-breaking function is necessary to ensure that the output remains a valid approximation of a permutation matrix.

The matching block takes the estimated ranking $\tilde{\bm{R}}$ to predict a matching $\hat{\bm{M}}$ by executing ${\rm TSD} (\mathbf{P}_W, \mathbf{P}_F, \tilde{\bm{R}})$.
Note that $\tilde{\bm{R}}$ is not guaranteed to be a hard permutation matrix, and consequently, $\hat{\bm{M}}$ may not always be a matching matrix, both due to the use of $\mathrm{SoftSort}$. 
We will discuss this further in Section~\ref{subsec:loss_fun}.
Note that we use TSD during training instead of executing SD directly in this matching block because SD is non-differentiable due to discrete operations and therefore unsuitable for gradient-based learning method.

We can prove that NeuralSD inherits the properties of SD.
\begin{proposition}
    For any parameter $\bm{\theta}$ of the ranking block $\bm{R}_{\bm{\theta}}$, the NeuralSD with $\bm{R}_{\bm{\theta}}$ is both SP and Pareto efficient.
    \label{prop:neuralsd}
\end{proposition}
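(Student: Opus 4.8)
The plan is to reduce the NeuralSD mechanism, for a fixed instance, to an ordinary serial dictatorship run on a single agent ranking that does not depend on the reported preferences, and then invoke Proposition~\ref{prop:sd}. Concretely, I would (i) show that the ranking block induces a well-defined ranking $\bm r(I)$ that is a function of the public contexts $\bm X_W,\bm X_F$ alone, (ii) show that the NeuralSD mechanism on $(I,\succcurlyeq)$ coincides with SD applied to $\bm r(I)$ and $\succcurlyeq$, and (iii) conclude SP and Pareto efficiency from the corresponding parts of Proposition~\ref{prop:sd}.

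For step (i), the scores $\bm a := \mathrm{Linear}\circ\mathrm{SelfAttn}(\bm X)\in\R^{n+m}$ are a deterministic function of $\bm X = [\bm X_W;\bm X_F]$ and of $\bm\theta$, and the map $\mathrm{TieBreak}(\bm a) = \bm a + \mathrm{rank}(\bm a)$ produces a vector with pairwise-distinct entries (this is precisely why $\mathrm{TieBreak}$ is included, as already noted in connection with $\mathrm{SoftSort}$), so sorting these entries yields a unique ranking $\bm r(I)\in(W\cup F)^{n+m}$ that is independent of $\succcurlyeq$. For step (ii), as a matching mechanism NeuralSD feeds the hard permutation matrix $\bm R$ associated with $\bm r(I)$ into $\mathrm{TSD}$ (equivalently, it runs SD directly, as stated for inference time); by Proposition~\ref{prop:tsd}, $\mathrm{TSD}(\mathbf P_W,\mathbf P_F,\bm R)$ equals the matrix representation of the matching produced by SD with ranking $\bm r(I)$ on the reports $\succcurlyeq$. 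Hence, writing $h$ for the NeuralSD mechanism, $h(I,\succcurlyeq) = \mathrm{SD}_{\bm r(I)}(\succcurlyeq)$ for every $(I,\succcurlyeq)$.

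For step (iii), strategy-proofness follows because for a fixed instance $I$ the ranking $\bm r(I)$ is identical under $(\succcurlyeq_a,\succcurlyeq_{-a})$ and $(\succcurlyeq_a',\succcurlyeq_{-a})$: the contexts, and therefore the output of the ranking block, are unchanged by a preference misreport. Thus both outcomes are produced by SD on the single ranking $\bm r(I)$, and the SP part of Proposition~\ref{prop:sd} gives $h(I,(\succcurlyeq_a,\succcurlyeq_{-a}))_a \succcurlyeq_a h(I,(\succcurlyeq_a',\succcurlyeq_{-a}))_a$, which is exactly~\eqref{eq:def_strategy_proof}. Pareto efficiency is even more direct: for every $(I,\succcurlyeq)$, $h(I,\succcurlyeq) = \mathrm{SD}_{\bm r(I)}(\succcurlyeq)$ is Pareto efficient with respect to $\succcurlyeq$ by the Pareto-efficiency part of Proposition~\ref{prop:sd}. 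The only real obstacle is the bookkeeping in steps (i)--(ii): making precise that the ranking is a genuine permutation depending on the contexts only, and citing Proposition~\ref{prop:tsd} to identify the TSD computation with an SD run; once these are in place, both properties are immediate from Proposition~\ref{prop:sd}.
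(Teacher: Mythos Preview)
Your proposal is correct and follows essentially the same route as the paper: fix an instance, observe that the ranking block (with $\texttt{argsort}$ at inference) yields a hard permutation depending only on the public contexts, identify the resulting mechanism with SD on that ranking via Proposition~\ref{prop:tsd}, and then inherit SP and Pareto efficiency from Proposition~\ref{prop:sd}. Your write-up is in fact slightly more explicit than the paper's in justifying that $\mathrm{TieBreak}$ yields pairwise-distinct scores and hence a genuine permutation, which is a nice touch.
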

The proof is provided in Appendix~\ref{apdx:prop_neuralsd}.
Intuitively, the output of NeuralSD is a hard matching matrix during inference, because we compute a hard permutation matrix for $\tilde{\bm{R}}$ by simply applying \texttt{argsort}.
Thus, NeuralSD is identical to SD with the ranking represented in $\tilde{\bm{R}}$.
Note that we relaxed the problem in Equation \eqref{eq:search_over_theta} because NeuralSD does not output a hard matching matrix during training. 
Nevertheless, we adhere to the problem in the sense that we can use NeuralSD as a SP matching mechanism at any time during training by fixing the parameters.

Because SD is not IR, neither is NeuralSD.
Moreover, the same upper bound applies to IRV of NeuralSD as Proposition~\ref{prop:irv_upper_bound_sd} since it holds for SD on any rankings.
\begin{proposition}
    Let $I$ be an instance and $\hat{\bm{M}}_I$ be the predicted matching of NeuralSD on $I$ with truthful reports.
    Then, $\max_{I\in\mathcal{I}}\{irv(\hat{\bm{M}}_I,\succcurlyeq_I)\} \le 1/2$.
    \label{prop:irv_upper_bound_nsd}
\end{proposition}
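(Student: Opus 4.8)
The plan is to reduce the statement to Proposition~\ref{prop:irv_upper_bound_sd} by showing that, at inference time with truthful reports, NeuralSD computes exactly the SD outcome for a concrete agent ranking determined by the instance, and then to apply the SD bound instance by instance.

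First I would recall how NeuralSD behaves at inference. The ranking block produces $\tilde{\bm{R}} = \mathrm{SoftSort}_\tau \circ \mathrm{TieBreak} \circ \mathrm{Linear} \circ \mathrm{SelfAttn}(\bm{X})$, but at inference $\mathrm{SoftSort}_\tau$ is replaced by the exact $\texttt{argsort}$ operator, as already used in the discussion of Proposition~\ref{prop:neuralsd}. The $\mathrm{TieBreak}$ step perturbs the score vector so that all of its entries become distinct, so $\texttt{argsort}$ returns a genuine permutation and $\tilde{\bm{R}}$ is a hard permutation matrix, encoding a well-defined ranking $\bm{r}_I = (r_1,\dots,r_{n+m})$ over the agents of $I$ via Equation~\eqref{eq:defR}.

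Next, since $\tilde{\bm{R}}$ is a hard permutation matrix and the reports are truthful, the preference tensors $\mathbf{P}_W$ and $\mathbf{P}_F$ are the $0/1$ tensors of Equation~\eqref{eq:defP}; hence by Proposition~\ref{prop:tsd} the matching block output $\hat{\bm{M}}_I = {\rm TSD}(\mathbf{P}_W, \mathbf{P}_F, \tilde{\bm{R}})$ equals the matrix representation of the SD outcome on instance $I$ with ranking $\bm{r}_I$ and reports $\succcurlyeq_I$, i.e. $\hat{\bm{M}}_I = \bm{M}_{\bm{r}_I}$ in the notation of Proposition~\ref{prop:irv_upper_bound_sd}. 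Finally I would invoke Proposition~\ref{prop:irv_upper_bound_sd} with the fixed ranking $\bm{r}_I$, which gives $irv(\bm{M}_{\bm{r}_I},\succcurlyeq_I) \le 1/2$. Since this holds for every instance $I\in\mathcal{I}$ (each with its own induced ranking $\bm{r}_I$), taking the maximum over $I$ yields $\max_{I\in\mathcal{I}}\{irv(\hat{\bm{M}}_I,\succcurlyeq_I)\} \le 1/2$, as claimed.

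The only delicate point is the first step: making rigorous that NeuralSD at inference coincides with plain SD for the ranking $\bm{r}_I$, i.e. that the $\mathrm{TieBreak}$/$\texttt{argsort}$ combination always produces a bona fide permutation matrix so that Proposition~\ref{prop:tsd} applies verbatim. Once that is in place the bound is immediate from Proposition~\ref{prop:irv_upper_bound_sd}, and, exactly as in the proof of Proposition~\ref{prop:neuralsd}, nothing specific to the learned parameters $\bm{\theta}$ is used.
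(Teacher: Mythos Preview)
Your proposal is correct and follows the same route as the paper: the paper does not give a separate formal proof for this proposition but simply remarks that the bound is inherited from Proposition~\ref{prop:irv_upper_bound_sd} because that bound holds for SD under \emph{any} ranking, which is exactly the reduction you spell out (NeuralSD at inference equals SD with the ranking $\bm{r}_I$ induced by $\tilde{\bm{R}}$, via Proposition~\ref{prop:tsd}). Your write-up is more explicit about why $\tilde{\bm{R}}$ is a genuine permutation matrix at inference, which is a useful clarification but not an additional idea.
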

Although the upper bound is $1/2$, our experiments discussed in Section~\ref{sec:experiments} show that NeuralSD is empirically IR in many cases.

We show the computational complexity of the ranking block as follows.
The proof is provided in Appendix~\ref{apdx:prop_computational_complexity_ranking}.
\begin{proposition}
    Let $n$ and $m$ be the number of workers and firms, respectively.
    Then, the computational complexity of the ranking block is $O((n+m)^2)$.
    \label{prop:computational_complexity_ranking}
\end{proposition}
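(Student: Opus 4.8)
The plan is to bound the cost of the ranking block by walking through its composition $\mathrm{SoftSort}_\tau \circ \mathrm{TieBreak} \circ \mathrm{Linear} \circ \mathrm{SelfAttn}$ stage by stage, letting $N := n+m$ denote the total number of agents, and then taking the maximum of the per-stage costs. Recall that the input is $\bm{X} = [\bm{X}_W; \bm{X}_F] \in \R^{N \times d}$, where $d$ is a fixed context dimension and $d_{\mathrm{emb}}$ the fixed embedding dimension; both are constants independent of $N$, so any factor depending only on $d$ or $d_{\mathrm{emb}}$ is absorbed into the constant.

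First I would analyze $\mathrm{SelfAttn}$. Computing the projections $\bm{X}\bm{W}^Q, \bm{X}\bm{W}^K, \bm{X}\bm{W}^V$ costs $O(N d\, d_{\mathrm{emb}}) = O(N)$. Forming the score matrix as a product of an $N \times d_{\mathrm{emb}}$ and a $d_{\mathrm{emb}} \times N$ matrix costs $O(N^2 d_{\mathrm{emb}}) = O(N^2)$, the row-wise softmax costs $O(N^2)$, and multiplying the resulting $N \times N$ attention weights by the $N \times d_{\mathrm{emb}}$ value matrix again costs $O(N^2 d_{\mathrm{emb}}) = O(N^2)$. So $\mathrm{SelfAttn}$ is $O(N^2)$, producing an $N \times d_{\mathrm{emb}}$ output. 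Next, $\mathrm{Linear}$ maps this to a length-$N$ vector by a matrix--vector product with a fixed-length weight vector plus a scalar bias, costing $O(N d_{\mathrm{emb}}) = O(N)$. Then $\mathrm{TieBreak}$ adds $\mathrm{rank}(\bm{a})$ to the length-$N$ vector $\bm{a}$; the rank vector can be computed by sorting in $O(N \log N)$ (or naively in $O(N^2)$), and the addition is $O(N)$. Finally, $\mathrm{SoftSort}_\tau(\bm{a})$ sorts $\bm{a}$ in $O(N \log N)$, forms the $N \times N$ matrix $|\mathrm{sort}(\bm{a})^\top \mathds{1} - \mathds{1}^\top \bm{a}|$ of pairwise absolute differences in $O(N^2)$, scales it, and applies a row-wise softmax in $O(N^2)$; the total is $O(N^2)$.

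Summing — or rather, taking the dominating term — over the four stages gives $O(N) + O(N^2) + O(N) + O(N^2) = O(N^2) = O((n+m)^2)$, which is exactly the claimed bound. The write-up is just a matter of stating each stage's cost and noting that constants depending on $d, d_{\mathrm{emb}}$ are fixed; there is no real obstacle, and the argument is elementary. The one mild subtlety worth a sentence is that the self-attention and SoftSort stages are the only ones that are genuinely quadratic — everything else is linear or $O(N \log N)$ — so the bound is tight for this architecture, and the $O((n+m)^2)$ should be read as an upper bound (the statement asks only for that), dominated by the $N \times N$ score/softmax matrices in $\mathrm{SelfAttn}$ and the $N \times N$ pairwise-difference matrix in $\mathrm{SoftSort}_\tau$.
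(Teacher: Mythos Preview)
Your proposal is correct and follows essentially the same approach as the paper: a stage-by-stage accounting of $\mathrm{SelfAttn}$, $\mathrm{Linear}$, $\mathrm{TieBreak}$, and $\mathrm{SoftSort}_\tau$, with the quadratic terms from attention and SoftSort dominating. Your write-up is in fact more detailed than the paper's, which simply lists the four per-stage costs and sums them.
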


\subsection{Loss Function}\label{subsec:loss_fun}
We define a loss function for training the NeuralSD model based on cross entropy between predicted and target matchings.
During training, we apply a row-wise softmax to the output matrix $\hat{\bm{M}}$ of NeuralSD to obtain probability distributions over partners for each agent.  
We then compute the average row-wise cross entropy loss between the predicted matching matrix $\hat{\bm{M}}$ and the correct matching matrix $\bm{M}$, up to the $n$-th row, which corresponds to all the workers.
This models the discrepancy function $\mathcal{L}$ in Equation~\eqref{eq:search_over_theta}, defined as:
\begin{align}
    \mathcal{L}(\mathrm{NeuralSD}(\mathbf{P}_{W}, \mathbf{P}_{F}, \bm{X}_{W}, \bm{X}_{F}), \bm{M})\notag\\
    := \frac{1}{n}\sum^{n}_{i=1} \mathcal{L}_{\mathrm{CE}}(\mathrm{softmax}(\hat{\bm{M}}_{i,:}), \bm{M}_{i,:}),
    \label{eq:defD}
\end{align}
where $\mathrm{NeuralSD}(\mathbf{P}_W, \mathbf{P}_{F}, \bm{X}_{W}, \bm{X}_{F})$ models $f_{\bm{\theta}}(I, \succcurlyeq)$ in Equation~\eqref{eq:search_over_theta}, $\mathcal{L}_{\mathrm{CE}}$ denotes the cross-entropy loss, and $\bm{A}_{i,:}$ is the $i$-th row of matrix $\bm{A}$.  
We exclude the last row (corresponding to unmatched agents $\perp$) from the loss calculation, because correct prediction of the other rows is sufficient to ensure alignment with the target matching.

\begin{table}[t]
    \centering
    \caption{
        Recovery rate of optimal rankings.
    }
    \label{tab:recovery_rate}
    \scalebox{0.9}{
    \begin{tabular}{lrrr}
    \toprule
      &\multicolumn{3}{c}{Recovery Rate$\uparrow$ ($^\ddagger p < 0.01$)}\\\cmidrule(lr){2-4}
      Model&DA&EH&MH\\\cmidrule(lr){1-1}\cmidrule(lr){2-4}
      RSD& $0.421\pm 0.0141$&$0.421\pm0.155$&$0.414\pm0.167$\\
      NeuralSD& $\mathbf{0.457\pm 0.0236}^\ddagger$ &$\mathbf{0.465\pm0.205}^\ddagger$&$\mathbf{0.456\pm0.0166}^\ddagger$\\\bottomrule
    \end{tabular}
    }
\end{table}
\begin{figure*}[t]
    \centering
    \begin{minipage}[b]{0.24\linewidth}
        \centering
        \includegraphics[width=\linewidth]{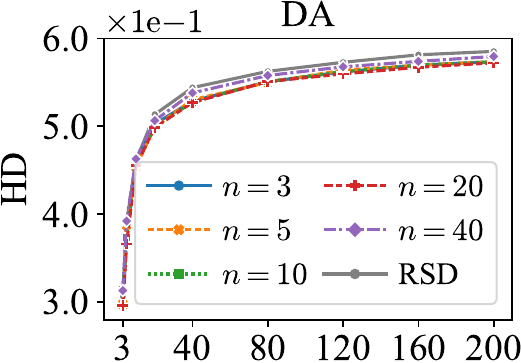}
    \end{minipage}\hfill
    \begin{minipage}[b]{0.24\linewidth}
        \centering
        \includegraphics[width=\linewidth]{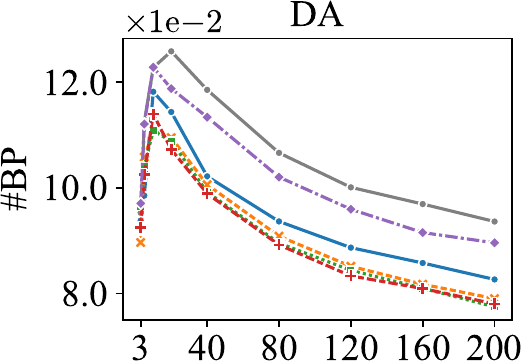}
    \end{minipage}\hfill
    \begin{minipage}[b]{0.24\linewidth}
        \centering
        \includegraphics[width=\linewidth]{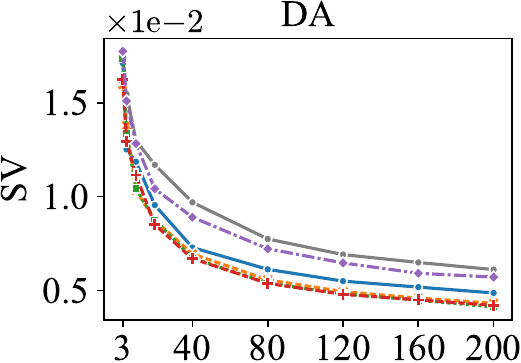}
    \end{minipage}\hfill
    \begin{minipage}[b]{0.24\linewidth}
        \centering
        \includegraphics[width=\linewidth]{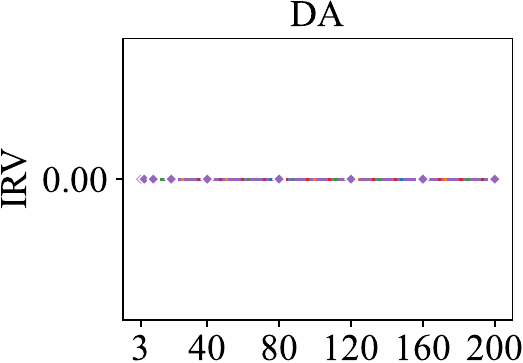}
    \end{minipage}\\
    \begin{minipage}[b]{0.24\linewidth}
        \centering
        \includegraphics[width=\linewidth]{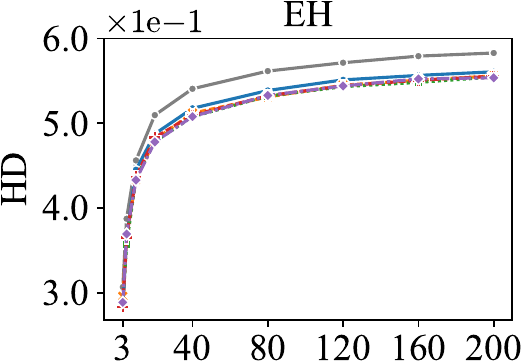}
    \end{minipage}\hfill
    \begin{minipage}[b]{0.24\linewidth}
        \centering
        \includegraphics[width=\linewidth]{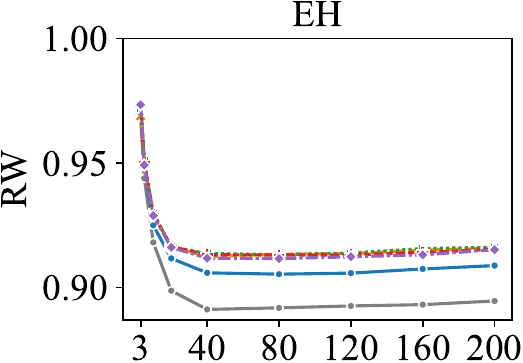}
    \end{minipage}\hfill
    \begin{minipage}[b]{0.24\linewidth}
        \centering
        \includegraphics[width=\linewidth]{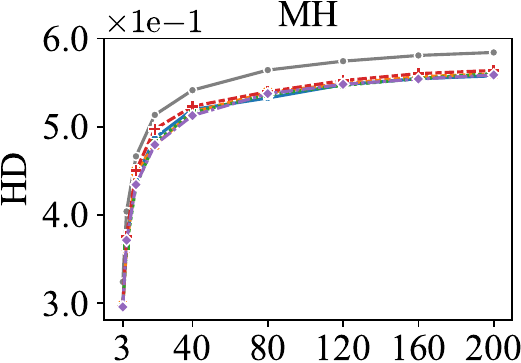}
    \end{minipage}\hfill
    \begin{minipage}[b]{0.24\linewidth}
        \centering
        \includegraphics[width=\linewidth]{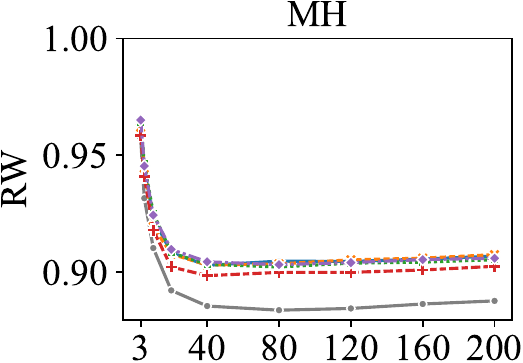}
    \end{minipage}\\
    \caption{
        Average performances across different scales for training and test data. 
        The legend $n=k$ means we set $k$ workers and firms in the training data.
        The horizontal axis represents $n$ in the test data. 
        The vertical axis shows the normalized values of each metric. 
        The legends are consistent across all plots.
    }
    \label{fig:test_cross}
\end{figure*}

\section{Experiments}
\label{sec:experiments}
We experimented to evaluate whether our method can learn a matching mechanism from examples that reflect the implicit goodness of matching outcomes, while satisfying SP.
We further evaluated scale-invariance of our method.
Details of the experimental settings and additional results are provided in Appendix~\ref{apdx:experimental_details}.

{\bf Experimental Setup.}
To simulate example matchings that reflect implicit notions of outcome goodness, we used three existing matching algorithms.
The first is the deferred acceptance (DA)~\cite{gale1962college}, a well-known stable matching algorithm.
The other two are based on the methods proposed by \citet{narasimhan2016automated}: equal-weighted Hungarian matching (EH) and minority-weighted Hungarian matching (MH).  
Both EH and MH define a reward for each worker-firm pair based on the reported preferences, and compute the matching that maximizes the total reward.  
The reward is calculated as the sum of inverted preference ranks ($n+2$ or $m+2$ minus $\mathrm{ord}$, as defined in Section~\ref{subsec:TSD}) each agent assigns to the other; MH further adds bonus weights for certain workers to reflect priority.  
Both EH and MH produce Pareto efficient matchings. Further details are provided in Appendix~\ref{apdx:experimental_details}.
Although these mechanisms are explicitly defined, we treat their outputs as examples of desirable matchings, assuming that the underlying notions of goodness for outcomes, such as stability or rewards, are expressed as examples and not explicitly specified.

We generated public contextual information of agents using random sampling.
We fixed the dimension of the contexts at $d=10$ and generated the worker and firm contexts by sampling independently from $\mathcal{N}(\mathds{1}_{10},\bm{I}_{10})$ and $\mathcal{N}(-\mathds{1}_{10},\bm{I}_{10})$, respectively, where $\bm{I}_{10}$ is the $10$-dimensional identity matrix.
The reports are generated based on Euclidean preferences \cite{bogomolnaia2007euclidean}, where an agent prefers another agent more if the distance between their contexts is shorter.
We generate 1,000 and 750 instances for training and test data, respectively. 
We set $n=m\in\{3, 5, 10, 20, 40, 80, 120, 160, 200\}$.
We conducted training for $n \le 40$ and tests for all values of $n$ to assess the robustness against the increasing number of test agents.

We compared NeuralSD to RSD, which samples agent rankings uniformly at random.
RSD was chosen as the sole baseline because we are only interested in SP mechanisms, and did not compare with other non-SP methods such as DA or neural networks~\cite{ravindranath2023deep,roth1982economics}.

To evaluate how well the learned mechanisms reproduce the example matchings and capture the underlying implicit objectives, we used two types of metrics: one for discrepancy between matchings, and the other for goodness of outcomes. 
As the discrepancy evaluation, we measured the Hamming distance (HD) between the predicted matchings and the example matchings.
For the outcomes, we evaluated the following three metrics for predicted matchings: number of blocking pairs (\#BP), IRV (see Section~\ref{subsec:neuralsd}), and stability violations (SV) when the examples were generated by DA, as DA is stable. 
SV was proposed to measure the extent to which the matchings fail to be stable~\cite{ravindranath2023deep}.
We normalized HD by dividing it by $3n$, \#BP by $n^2$, and SV by $n$.
In Appendix~\ref{apdx:stability_reg}, we report experiments with SV regularized loss function to investigate how to balance the incompatibility between SP and stability.
When the examples were generated by EH or MH mechanisms, we measured the reward (RW) obtained by the predicted matching, normalized by dividing by the optimal reward calculated by the corresponding mechanism.

To further evaluate the learning performance, we examined whether NeuralSD can accurately recover the agent rankings that lead to matchings closest to the target outcomes. 
Specifically, for each test instance with $n=3$, we performed a brute-force search over all possible SD rankings to identify the set of rankings that produce a matching matrix closest to the example matching. 
A predicted ranking $\tilde{\bm{R}}$ was considered correct if it belonged to this set of optimal rankings. 
For RSD, we defined the randomly selected ranking as correct if it happened to fall within the same set. 
We then computed the recovery rate of optimal rankings as the proportion of test instances for which the predicted ranking was included in the optimal set.

{\bf Results and Discussion.}
The average performance is shown in Figure~\ref{fig:test_cross}, and partial results are presented in Table~\ref{tab:results}.
We compared models using the one-sided Wilcoxon signed-rank test on the test instances.
For the test with $n=200$ in Table~\ref{tab:results}, we show the results of our model trained at $n=40$.
The results for other values of $n$ are provided in Appendix~\ref{apdx:experimental_details}.
The results for RSD correspond to how closely RSD performs relative to the mechanisms used for examples.

As the number of agents increased, NeuralSD significantly outperformed RSD in terms of HD as well as metrics for evaluating outcomes, with variances remaining of a comparable order.
These results highlight the advantage of end-to-end learning of agent rankings from agents' public contextual information to obtain better matching outcomes.
Figure~\ref{fig:test_cross} shows that NeuralSD performs well on large-scale instances even when trained on small-scale data, which supports its scale-invariance. 
\#BP did not consistently increase, partially due to the statistical properties of our datasets.
SV did not align with \#BP because it was designed to be zero for stable matchings, without consideration for matchings with a positive number of blocking pairs~\cite{ravindranath2023deep}.
IRV of all models was zero in our experiments, but IRV can be added to the loss function in case we prioritize IR more.

The recovery rate is presented in Table~\ref{tab:recovery_rate}. 
To compare the recovery rate, we trained our proposed model with $n=3$ and tested on 20 different runs and conducted Wilcoxon signed-rank test.
We confirmed that NeuralSD recovers the optimal ranking with significantly larger recovery rates than those with RSD.
This result suggests that NeuralSD can learn the best parameters to recover the desired mechanism at approximately 45\% and can benefit from contextual information for estimating preferred matching outcomes expressed by the examples, rather than using a random ranking as in RSD.

\section{Conclusion and Limitations}
\label{sec:conclusion_limitations}
In this paper, we introduced NeuralSD, a neural network that learns strictly SP, scale-invariant, context aware matching mechanisms from examples.  
Experimental results confirmed that NeuralSD can learn from the example matchings to reflect the implicit notions of goodness for outcomes.
The main limit is the heavy forward pass, so better scaling is left for future work.
The main experimental limitation was the lack of access to real-world datasets, which we plan to address in future work. 
Our work has potential negative impacts due to the drawbacks of SD itself, which can output unstable matchings and would yield matching outcomes that are unsatisfactory for agents by prioritizing some agents. 
A potential solution is to incorporate fairness constraints, which we will consider in future work.



\begin{ack}
This work was supported by JST BOOST, Grant Number JPMJBS2407; JST FOREST Program, Grant Number JPMJFR232S; and JSPS KAKENHI Grant-In-Aid 21H04979.
\end{ack}



\bibliography{mybibfile}

\begin{thebibliography}{39}
\providecommand{\natexlab}[1]{#1}
\providecommand{\url}[1]{\texttt{#1}}
\expandafter\ifx\csname urlstyle\endcsname\relax
  \providecommand{\doi}[1]{doi: #1}\else
  \providecommand{\doi}{doi: \begingroup \urlstyle{rm}\Url}\fi

\bibitem[Abdulkadiro^^c4^^9flu and S^^c3^^b6nmez(1998)]{abdulkadiroglu1998random}
A.~Abdulkadiro^^c4^^9flu and T.~S^^c3^^b6nmez.
\newblock Random serial dictatorship and the core from random endowments in house allocation problems.
\newblock \emph{Econometrica}, 66\penalty0 (3):\penalty0 689--701, 1998.

\bibitem[Abdulkadiro^^c4^^9flu and S^^c3^^b6nmez(2003)]{abdulkadiroglu2003school}
A.~Abdulkadiro^^c4^^9flu and T.~S^^c3^^b6nmez.
\newblock School choice: A mechanism design approach.
\newblock \emph{The American Economic Review}, 93\penalty0 (3):\penalty0 729--747, 2003.

\bibitem[Ayg\"un and B^^c3^^b3(2021)]{aygun2021college}
O.~Ayg\"un and I.~B^^c3^^b3.
\newblock College admission with multidimensional privileges: The brazilian affirmative action case.
\newblock \emph{American Economic Journal: Microeconomics}, 13\penalty0 (3):\penalty0 1--28, 2021.

\bibitem[Bogomolnaia and Laslier(2007)]{bogomolnaia2007euclidean}
A.~Bogomolnaia and J.-F. Laslier.
\newblock Euclidean preferences.
\newblock \emph{Journal of Mathematical Economics}, 43\penalty0 (2):\penalty0 87--98, 2007.

\bibitem[Chen and Song(2013)]{chen2013loan}
J.~Chen and K.~Song.
\newblock Two-sided matching in the loan market.
\newblock \emph{International Journal of Industrial Organization}, 31\penalty0 (2):\penalty0 145--152, 2013.

\bibitem[Conitzer and Sandholm(2002)]{conitzer2002complexity}
V.~Conitzer and T.~Sandholm.
\newblock Complexity of mechanism design.
\newblock In \emph{Proceedings of the 18th Conference on Uncertainty in Artificial Intelligence (UAI)}, pages 103--110, 2002.

\bibitem[Conitzer and Sandholm(2004)]{conitzer2004self}
V.~Conitzer and T.~Sandholm.
\newblock Self-interested automated mechanism design and implications for optimal combinatorial auctions.
\newblock In \emph{Proceedings of the Fifth ACM Conference on Electronic Commerce (EC)}, pages 132--141, 2004.

\bibitem[Curry et~al.(2023)Curry, Sandholm, and Dickerson]{curry2023differentiable}
M.~Curry, T.~Sandholm, and J.~Dickerson.
\newblock Differentiable economics for randomized affine maximizer auctions.
\newblock In \emph{Proceedings of the 32nd International Joint Conference on Artificial Intelligence (IJCAI)}, pages 2633--2641, 2023.

\bibitem[Curry et~al.(2022)Curry, Lyi, Goldstein, and Dickerson]{curry2022learning}
M.~J. Curry, U.~Lyi, T.~Goldstein, and J.~P. Dickerson.
\newblock Learning revenue-maximizing auctions with differentiable matching.
\newblock In \emph{Proceedings of the 25th International Conference on Artificial Intelligence and Statistics (AISTATS)}, pages 6062--6073, 2022.

\bibitem[Duan et~al.(2022)Duan, Tang, Yin, Feng, Yan, Zaheer, and Deng]{duan2022context}
Z.~Duan, J.~Tang, Y.~Yin, Z.~Feng, X.~Yan, M.~Zaheer, and X.~Deng.
\newblock A context-integrated transformer-based neural network for auction design.
\newblock In \emph{Proceedings of the 39th International Conference on Machine Learning (ICML)}, pages 5609--5626, 2022.

\bibitem[Duan et~al.(2023)Duan, Sun, Chen, and Deng]{duan2023scalable}
Z.~Duan, H.~Sun, Y.~Chen, and X.~Deng.
\newblock A scalable neural network for {DSIC} affine maximizer auction design.
\newblock In \emph{Advances in Neural Information Processing Systems (NeurIPS)}, volume~36, pages 56169--56185, 2023.

\bibitem[Dur et~al.(2020)Dur, Pathak, and S^^c3^^b6nmez]{dur2020explicit}
U.~Dur, P.~A. Pathak, and T.~S^^c3^^b6nmez.
\newblock Explicit vs. statistical targeting in affirmative action: Theory and evidence from {Chicago}'s exam schools.
\newblock \emph{Journal of Economic Theory}, 187:\penalty0 104996, 2020.

\bibitem[D\"utting et~al.(2019)D\"utting, Feng, Narasimhan, Parkes, and Ravindranath]{duetting2019optimal}
P.~D\"utting, Z.~Feng, H.~Narasimhan, D.~C. Parkes, and S.~S. Ravindranath.
\newblock Optimal auctions through deep learning.
\newblock In \emph{Proceedings of the 36th International Conference on Machine Learning (ICML)}, pages 1706--1715, 2019.

\bibitem[Erdil and Ergin(2017)]{erdil2017two-sided}
A.~Erdil and H.~Ergin.
\newblock Two-sided matching with indifferences.
\newblock \emph{Journal of Economic Theory}, 171:\penalty0 268--292, 2017.

\bibitem[Feng et~al.(2018)Feng, Narasimhan, and Parkes]{feng2018deep}
Z.~Feng, H.~Narasimhan, and D.~C. Parkes.
\newblock Deep learning for revenue-optimal auctions with budgets.
\newblock In \emph{Proceedings of the 17th International Conference on Autonomous Agents and MultiAgent Systems (AAMAS)}, page 354^^e2^^80^^93362, 2018.

\bibitem[Gale and Shapley(1962)]{gale1962college}
D.~Gale and L.~S. Shapley.
\newblock College admissions and the stability of marriage.
\newblock \emph{The American Mathematical Monthly}, 69\penalty0 (1):\penalty0 9--15, 1962.

\bibitem[Golowich et~al.(2018)Golowich, Narasimhan, and Parkes]{golowich2018deep}
N.~Golowich, H.~Narasimhan, and D.~C. Parkes.
\newblock Deep learning for multi-facility location mechanism design.
\newblock In \emph{Proceedings of the 27th International Joint Conference on Artificial Intelligence (IJCAI)}, pages 261--267, 2018.

\bibitem[Guo et~al.(2015)Guo, Shen, Todo, Sakurai, and Yokoo]{guo2015social}
M.~Guo, H.~Shen, T.~Todo, Y.~Sakurai, and M.~Yokoo.
\newblock Social decision with minimal efficiency loss: An automated mechanism design approach.
\newblock In \emph{Proceedings of the 14th International Conference on Autonomous Agents and Multiagent Systems (AAMAS)}, pages 347--355, 2015.

\bibitem[{Gurobi Optimization, LLC}(2025)]{gurobi}
{Gurobi Optimization, LLC}.
\newblock {Gurobi Optimizer Reference Manual}, 2025.
\newblock URL \url{https://www.gurobi.com}.

\bibitem[Imamura and Kawase(2024)]{imamura2024matching}
K.~Imamura and Y.~Kawase.
\newblock Efficient matching under general constraints.
\newblock \emph{Games and Economic Behavior}, 145:\penalty0 197--207, 2024.

\bibitem[Li(2017)]{li2017ethics}
S.~Li.
\newblock Ethics and market design.
\newblock \emph{Oxford Review of Economic Policy}, 33\penalty0 (4):\penalty0 705--720, 2017.

\bibitem[Narasimhan and Parkes(2016)]{narasimhan2016general}
H.~Narasimhan and D.~C. Parkes.
\newblock A general statistical framework for designing strategy-proof assignment mechanisms.
\newblock In \emph{Proceedings of the 32nd Conference on Uncertainty in Artificial Intelligence (UAI)}, pages 527--536, 2016.

\bibitem[Narasimhan et~al.(2016)Narasimhan, Agarwal, and Parkes]{narasimhan2016automated}
H.~Narasimhan, S.~Agarwal, and D.~C. Parkes.
\newblock Automated mechanism design without money via machine learning.
\newblock In \emph{Proceedings of the 25th International Joint Conference on Artificial Intelligence (IJCAI)}, pages 433--439, 2016.

\bibitem[{National Resident Matching Program}(2024)]{nrmp2024impact}
{National Resident Matching Program}.
\newblock Impact of length of rank order list on match results: 2005-2024 main residency match, 3 2024.
\newblock URL \url{https://www.nrmp.org/wp-content/uploads/2024/03/Impact-Data_2024.pdf}.
\newblock Accessed: 2025-06-30.

\bibitem[Peri et~al.(2021)Peri, Curry, Dooley, and Dickerson]{peri2021preferencenet}
N.~Peri, M.~Curry, S.~Dooley, and J.~Dickerson.
\newblock Preferencenet: Encoding human preferences in auction design with deep learning.
\newblock In \emph{Advances in Neural Information Processing Systems (NeurIPS)}, volume~34, pages 17532--17542, 2021.

\bibitem[Prillo and Eisenschlos(2020)]{prillo2020softsort}
S.~Prillo and J.~Eisenschlos.
\newblock {S}oft{S}ort: A continuous relaxation for the argsort operator.
\newblock In \emph{Proceedings of the 37th International Conference on Machine Learning (ICML)}, pages 7793--7802, 2020.

\bibitem[Rahme et~al.(2021)Rahme, Jelassi, Bruna, and Weinberg]{rahme2021permutation}
J.~Rahme, S.~Jelassi, J.~Bruna, and S.~M. Weinberg.
\newblock A permutation-equivariant neural network architecture for auction design.
\newblock In \emph{Proceedings of the 35th AAAI Conference on Artificial Intelligence (AAAI)}, volume~35, pages 5664--5672, 2021.

\bibitem[Ravindranath et~al.(2023)Ravindranath, Feng, Li, Ma, Kominers, and Parkes]{ravindranath2023deep}
S.~S. Ravindranath, Z.~Feng, S.~Li, J.~Ma, S.~D. Kominers, and D.~C. Parkes.
\newblock Deep learning for two-sided matching.
\newblock arXiv preprint arXiv:2107.03427, 2023.

\bibitem[Roth(1982)]{roth1982economics}
A.~E. Roth.
\newblock The economics of matching: Stability and incentives.
\newblock \emph{Mathematics of Operations Research}, 7\penalty0 (4):\penalty0 617--628, 1982.

\bibitem[Roth(1984)]{roth1984evolution}
A.~E. Roth.
\newblock The evolution of the labor market for medical interns and residents: A case study in game theory.
\newblock \emph{Journal of Political Economy}, 92\penalty0 (6):\penalty0 991--1016, 1984.

\bibitem[Roth and Sotomayor(1990)]{roth1990two-sided}
A.~E. Roth and M.~A.~O. Sotomayor.
\newblock \emph{{Two-Sided Matching: A Study in Game-Theoretic Modeling and Analysis}}.
\newblock Cambridge University Press, 1990.

\bibitem[Sandholm(2003)]{sandholm2003automated}
T.~Sandholm.
\newblock Automated mechanism design: A new application area for search algorithms.
\newblock In \emph{Principles and Practice of Constraint Programming -- CP 2003}, pages 19--36, 2003.

\bibitem[Sandholm and Likhodedov(2015)]{sandholm2015automated}
T.~Sandholm and A.~Likhodedov.
\newblock Automated design of revenue-maximizing combinatorial auctions.
\newblock \emph{Operations Research}, 63\penalty0 (5):\penalty0 1000--1025, 2015.

\bibitem[Satterthwaite and Sonnenschein(1981)]{satterthwaite1981strategy}
M.~A. Satterthwaite and H.~Sonnenschein.
\newblock Strategy-proof allocation mechanisms at differentiable points.
\newblock \emph{The Review of Economic Studies}, 48\penalty0 (4):\penalty0 587--597, 1981.

\bibitem[S^^c3^^b6nmez and ^^c3^^9cnver(2022)]{sonmez2022market}
T.~S^^c3^^b6nmez and U.~^^c3^^9cnver.
\newblock Market design for social justice: A case study on a constitutional crisis in india, 2022.
\newblock Available at SSRN:http://dx.doi.org/10.2139/ssrn.4244354.

\bibitem[Shen et~al.(2019)Shen, Tang, and Zuo]{shen2019automated}
W.~Shen, P.~Tang, and S.~Zuo.
\newblock Automated mechanism design via neural networks.
\newblock In \emph{Proceedings of the 18th International Conference on Autonomous Agents and MultiAgent Systems (AAMAS)}, pages 215--223, 2019.

\bibitem[Sun et~al.(2023)Sun, Takenami, Moriwaki, Tomita, and Yokoo]{sun2023daycare}
Z.~Sun, Y.~Takenami, D.~Moriwaki, Y.~Tomita, and M.~Yokoo.
\newblock Daycare matching in {Japan}: Transfers and siblings.
\newblock In \emph{Proceedings of the 37th AAAI Conference on Artificial Intelligence (AAAI)}, volume~37, pages 14487--14495, 2023.

\bibitem[Vaswani et~al.(2017)Vaswani, Shazeer, Parmar, Uszkoreit, Jones, Gomez, Kaiser, and Polosukhin]{vaswani2017attention}
A.~Vaswani, N.~Shazeer, N.~Parmar, J.~Uszkoreit, L.~Jones, A.~N. Gomez, L.~Kaiser, and I.~Polosukhin.
\newblock Attention is all you need.
\newblock In \emph{Advances in Neural Information Processing Systems (NeurIPS)}, volume~30, pages 6000--6010, 2017.

\bibitem[Wang et~al.(2021)Wang, Guo, Sakurai, Ali~Babar, and Guo]{wang2020mechanism}
G.~Wang, R.~Guo, Y.~Sakurai, M.~Ali~Babar, and M.~Guo.
\newblock Mechanism design for public projects via neural networks.
\newblock In \emph{Proceedings of the 20th International Conference on Autonomous Agents and MultiAgent Systems (AAMAS)}, pages 1380--1388, 2021.

\end{thebibliography}

\clearpage
\appendix
\section{Proof of Proposition~\ref{prop:sd}}
\label{apdx:prop_sd}
We prove Proposition~\ref{prop:sd} as follows.
\begin{proof}
Consider an instance with $n$ workers and $m$ firms with truthful preference profile $\succcurlyeq = (\succcurlyeq_{w_1}, \dots, \succcurlyeq_{w_n}, \succcurlyeq_{f_1}, \dots, \succcurlyeq_{f_m})$. 
Fix an arbitrary ranking $\bm{r} = (r_1, \dots, r_{n+m})$ over the agents and consider a matching $\mu$ determined by SD under the ranking $\bm{r}$ and the reports $\succcurlyeq$. 
We say that the agent $r_k$ is {\em active} if, at the $k$-th iteration, the agent is not yet matched and selectively matches with a partner.
Let $r_{(1)} = r_1, r_{(2)}, \dots$ be active agents in order.
Conversely, an agent is {\em passive} if, at the $k$-th iteration, the agent is already matched because their partner is chosen as an agent with a preceding rank.

First, we prove Pareto efficiency.
Suppose another matching $\mu'$ that Pareto dominates $\mu$ under the preference profile $\succcurlyeq$. 
Then, there must exist at least one agent $r_k$ such that $\mu'(r_k) \succ_{r_k} \mu(r_j)$.
We consider two cases:
\begin{enumerate}[(1)]
    \item Suppose that $r_k$ is active.  
    In this case, because the first active agent can choose their most preferred option, it must be that $r_k \neq r_{(1)} = r_1$.  
    If $r_k = r_{(2)}$, then due to $\mu'(r_{(2)}) \neq \mu(r_{(2)})$, we have $\mu'(r_{(2)}) \in \{r_{(1)}, \mu(r_{(1)})\}$.  
    However, in either case, $\mu'(r_{(1)}) \not\succcurlyeq \mu(r_{(2)})$, so it cannot be $r_{(2)}$.  
    Similarly, for any active agent, changing their match does not make $\mu'$ Pareto dominate $\mu$.  
    This contradicts the definition of $\mu$.
    \item Suppose that $r_k$ is passive.
    Then, $\mu$ and $\mu'$ must be identical because all active agents match with the same partner under both matchings, and the behavior of SD uniquely determines the partners of the passively matched agents.
    This contradicts the definition of $\mu'$.
\end{enumerate}

Therefore, $\mu$ is Pareto efficient.

Next, we prove SP.
Suppose an agent $a = r_k$ at an arbitrary rank $k$ is better off by a reported profile $\succcurlyeq_a'\neq \succcurlyeq_a$.
Because $\bm{r}$ is predefined before collecting reports, the list of options $l_a\subset W\cup F$ that $a$ can designate at iteration $k$ is determined irrespective of $\succcurlyeq_a'$.
Because the most preferred option in this $l_a$ is fixed, there is no gain by submitting other preferences.
Therefore, there is no gain from submitting the untruthful $\succcurlyeq_a'$, and thus SD satisfies SP.  
\end{proof}


\section{Description of TSD}
\label{apdx:tensorserialdictatorship}
This section describes the computation of SD through tensor operations. 
The main algorithm is presented in Algorithm~\ref{alg:tsd}, which utilizes the helper function described in Algorithms~\ref{alg:createrankingmasks} and~\ref{alg:findcounterpart}.
We have introduced additional necessary notations before explaining these three algorithms.

\paragraph{Additional Notations.}
$\bm{0}_d$ and $\mathds{1}_d$ denote $d$-dimensional all-zero and all-one row vectors, respectively.
$\bm{O}_{p,q}$ denotes a zero matrix with $p$ rows and $q$ columns.
If not specified, the subscripts $d$, $p$, and $q$ are omitted.
Moreover, we used indexing notation similar to that of Numpy.
For a matrix $\bm{X}\in\R^{p\times q}$, $\bm{X}_{:i,j} := [X_{1,j},\dots, X_{i,j}]^\top$ represents the top $i$ elements of the $j$-th column, and $\bm{X}_{i:,j} := [X_{i,j},\dots,X_{q,j}]^\top$ represents elements from the $i$-th row to the end of the $j$-th column.
Furthermore, $\bm{y}_{:k} := [y_1,\dots, y_k]$ denotes the first $k$ elements of the vector $\bm{y}$.

We use common operations on tensors, as typically found in libraries such as Numpy and PyTorch.
$[\bm{x}\|c] := [x_1, \dots, x_d, c]$ is a row vector obtained by concatenating a vector $\bm{x} = [x_1, \dots, x_d]$ with a scalar $c$.
For a three-dimensional tensor $\mathbf{A}$ and column vector $\bm{x}^\top$, the product $\mathbf{A}\bm{x}^\top$ involves broadcasting: $\mathbf{A}\bm{x}^\top := \sum_i x_i \bm{A}_i$, where $\bm{A}_i$ is the $i$-th matrix within $\mathbf{A}$. 
For two vectors of identical dimensions, $\bm{x} = [x_1, \dots, x_d]$ and $\bm{y} = [y_1, \dots, y_d]$, and their element-wise product is denoted as $\bm{x} \circ \bm{y} := [x_1y_1, \dots, x_dy_d]$. 
With a slight abuse of notation, we define $\mathbf{A} \circ \bm{x} := [x_1\bm{A}_1, \dots, x_d\bm{A}_d]$.
$\texttt{repeat}(\bm{x}, p) := \mathds{1}_p^\top \bm{x}$ is the matrix obtained by repeating $\bm{x}$ $p$ times along the row direction. 
$\texttt{cumsum}(\bm{x}) := [\sum_{i=1}^j x_i]_{j=1, \dots, d}$ is a function that calculates the cumulative sum of a row vector $\bm{x}$. 
For a matrix $\bm{A}$, $\texttt{colsum}(\bm{A})$ produces the row vector obtained by summing columns. 
The function $\mathrm{ReLU}(x) := \max\{x,0\}$ is applied to a scalar $x$, whereas for tensors, it is applied to each individual element.

\paragraph{Algorithm~\ref{alg:createrankingmasks}.}
\begin{algorithm}[tb]
    \caption{\textsc{CreateRankingMasks}}\label{alg:createrankingmasks}
    \textbf{Input}: A ranking matrix $\bm{R}\in\R^{(n+m)\times (n+m)}$.\\
    \textbf{Output}: Ranking masks $\bm{U}_W^{(1)},$ $\dots,$ $\bm{U}_W^{(n+m)},$ $\bm{U}_F^{(1)},$ $\dots,$ $\bm{U}_F^{(n+m)}$.
    \begin{algorithmic}[1]
          \STATE Initialize $\bm{U}_W^{(1)},\dots,\bm{U}_W^{(n+m)}$ by $\bm{O}_{m+1,m+1}$
          \STATE Initialize $\bm{U}_F^{(1)},\dots,\bm{U}_F^{(n+m)}$ by $\bm{O}_{n+1,n+1}$
          \FOR{$k = 1,\dots, n+m$}
            \STATE $\bm{U}_W^{(k)}\leftarrow 
            \sum_{r=n+1}^{n+m}R_{r,k}[\underbrace{\bm{0}^\top\cdots\stackrel{r-n}{-\mathds{1}^\top}\cdots\bm{0}^\top}_{m}\bm{0}^\top]^\top$
            \STATE $\bm{U}_F^{(k)}\leftarrow \sum_{r=1}^n R_{r,k} [\underbrace{\bm{0}^\top\cdots\stackrel{r}{-\mathds{1}^\top}\cdots\bm{0}^\top}_{n}~\bm{0}^\top]^\top$
          \ENDFOR
        \STATE \textbf{return} $\bm{U}_W^{(1)},\dots,\bm{U}_W^{(n+m)}, \bm{U}_F^{(1)},\dots,\bm{U}_F^{(n+m)}$
    \end{algorithmic}
\end{algorithm}
In Algorithm~\ref{alg:createrankingmasks}, we compute \textit{ranking masks}. 
These masks are matrices designed to selectively set zeros on the rows corresponding to matched agents in each iteration of the SD.
We assume the input $\bm{R}$ as a ranking matrix, which is a permutation matrix with size $n+m$ and represents a ranking in SD. 
Then, the matrix $\bm{U}_W^{(k)}$ is computed as follows ($\bm{U}_F^{(k)}$ is defined in a similar manner):
\begin{align}
    \bm{U}_W^{(k)} = 
    \left\{
    \begin{array}{cl} 
        \begin{bmatrix}
            0 & \cdots & 0\\
            \vdots & \ddots & \vdots\\
            -1 & \cdots & -1\\
            \vdots & \ddots & \vdots\\
            0 & \cdots & 0\\
            0 & \cdots & 0\\
        \end{bmatrix}
        \begin{array}{c}
        1\\\vdots\\j-n\\\vdots\\m\\m+1
        \end{array}
    &(\text{if $R_{jk} = 1$})\\
    \bm{O}_{m+1, m+1}
    &(\text{otherwise})
    \end{array}\right.
    .
\end{align}
\paragraph{Algorithm~\ref{alg:findcounterpart}.}
\begin{algorithm}[tb]
   \caption{\textsc{FindCounterpart}}\label{alg:findcounterpart}
   \textbf{Input}: A binary matrix $\bm{P}$.\\
   \textbf{Output}: A vector $\bm{c}^\top$.
    \begin{algorithmic}[1]
        \STATE $\bm{c}\leftarrow \texttt{colsum}(\bm{P})$
        \STATE $\bm{c}\leftarrow \texttt{cumsum}(\bm{c})$
        \STATE $\bm{c}\leftarrow \texttt{triangle\_window}(\bm{c})$
        \STATE $\bm{c}^\top\leftarrow \bm{P}\bm{c}^\top$
        \STATE \textbf{return} $\bm{c}^\top$
    \end{algorithmic}
\end{algorithm}

\begin{algorithm}[tb]
\caption{TSD}\label{alg:tsd}
\textbf{Input}: Preference tensor $\mathbf{P}_W, \mathbf{P}_F$, and ranking matrix $\bm{R}$\\
\textbf{Output}: Matrix representation $\bm{M}$ of the matching result given by SD under $\bm{R}$
\begin{algorithmic}[1] 
\STATE $\bm{U}_W^{(1)},\dots,\bm{U}_W^{(n+m)}, \bm{U}_F^{(1)},\dots,\bm{U}_F^{(n+m)}$\\
$\leftarrow\textsc{CreateRankingMasks}(\bm{R})$
\STATE $\bm{M}\leftarrow \bm{O}_{n+1,m+1}$
\FOR{$k = 1,\dots, n+m$}
	\STATE $\bm{d}_w, \bm{d}_f \leftarrow \bm{R}_{:n,k}^\top, \bm{R}_{n+1:,k}^\top$
            \hfill $\langle 1\rangle$
	\STATE $\bm{P}_w, \bm{P}_f\leftarrow \mathbf{P}_W\bm{d}_w^\top, \mathbf{P}_F\bm{d}_f^\top$
    	\hfill $\langle 2\rangle$
	\STATE $\bm{c}_w^\top \leftarrow \textsc{FindCounterpart}(\bm{P}_w)$
        \STATE $\bm{c}_f^\top \leftarrow \textsc{FindCounterpart}(\bm{P}_f)$\hfill $\langle 3\rangle$
	\STATE $\bm{M}_w, \bm{M}_f \leftarrow [\bm{d}_w\|0]^\top \bm{c}_w, [\bm{d}_f\|0]^\top \bm{c}_f$
    	\hfill $\langle 4\rangle$
	\STATE $\bm{M} \leftarrow \bm{M} + \bm{M}_w + \bm{M}_f^\top$
    	\hfill $\langle 5\rangle$
	\STATE $\bm{V}_W\leftarrow (-1) \cdot \texttt{repeat}(\bm{c}_w\circ [\mathds{1}_m\| 0], m+1)^\top$
        \STATE $\bm{V}_F\leftarrow (-1) \cdot \texttt{repeat}(\bm{c}_f\circ [\mathds{1}_n\| 0], n+1)^\top$
	    \hfill $\langle 6\rangle$
	\STATE $\mathbf{P}_W \leftarrow \mathbf{P}_W + \bm{U}_W^{(k)} + \bm{V}_W$
        \STATE $\mathbf{P}_F\leftarrow \mathbf{P}_F+\bm{U}_F^{(k)}+\bm{V}_F$
		\hfill $\langle 7\rangle$
	\STATE $\mathbf{P}_W\leftarrow\mathrm{ReLU}(\mathbf{P}_W)\circ (\mathds{1}_{n+1}-\bm{c}_f)_{:n}^\top$
        \STATE $\mathbf{P}_F\leftarrow  \mathrm{ReLU}(\mathbf{P}_F)\circ (\mathds{1}_{m+1}-\bm{c}_w)_{:m}^\top$
    	\phantom{\hspace{4.65em}}$\langle 8\rangle$
\ENDFOR
\STATE \textbf{return} $\bm{M}$
\end{algorithmic}
\end{algorithm}
The function $\texttt{triangle\_window}(x)$ in Algorithm~\ref{alg:findcounterpart} represents a type of window function defined as follows:
\begin{align}
    \texttt{triangle\_window}(x) :=
    \left\{\begin{array}{cl}
        0 & \text{(if $x \le 0$)}\\
        x & \text{(if $0 < x \le 1$)}\\
        2-x & \text{(if $1 < x \le 2$)}\\
        0 & \text{(if $x > 2$)}
    \end{array}\right..\label{eq:deftw}
\end{align}
Given a binary matrix $\bm{P}$, Algorithm~\ref{alg:findcounterpart} is designed to identify the leftmost column containing at least one `$1$'. 
Formally, we can give the following result:

\begin{proposition}
    If the input $\bm{P}$ satisfies the conditions
    \begin{inparaenum}[(1)]
        \item $\exists p, q \in \mathbb{N}, \bm{P} \in \{0,1\}^{p \times q}$ and
        \item all columns and all rows contain at most one `$1$',
    \end{inparaenum}
    then $\textsc{FindCounterpart}(\bm{P})$ outputs $\bm{0}_p$ when $\bm{P} = \bm{O}_{p,q}$, or it outputs the leftmost column that contains at least one `$1$'.\label{prop:findcounterpart}
\end{proposition}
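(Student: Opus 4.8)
\textbf{Proof proposal for Proposition~\ref{prop:findcounterpart}.}

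The plan is to trace the vector $\bm{c}$ through the four assignment lines of Algorithm~\ref{alg:findcounterpart} and identify exactly what each step produces under the hypotheses on $\bm{P}$. First I would handle the degenerate case $\bm{P}=\bm{O}_{p,q}$ separately: $\texttt{colsum}(\bm{P})$ is the zero row vector, $\texttt{cumsum}$ of it is still zero, $\texttt{triangle\_window}$ maps $0$ to $0$, and $\bm{P}\bm{c}^\top$ is then $\bm{0}_p$, so the stated output holds. For the main case, let $j^\star$ be the index of the leftmost column of $\bm{P}$ containing a `$1$'. Since every column has at most one `$1$', the vector $\bm{c}_0 := \texttt{colsum}(\bm{P})$ is a $\{0,1\}$-vector whose first nonzero entry is at position $j^\star$. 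After $\texttt{cumsum}$, the resulting vector $\bm{c}_1$ satisfies $c_{1,j}=0$ for $j<j^\star$, $c_{1,j^\star}=1$, and $c_{1,j}\ge 1$ for $j\ge j^\star$ (it is nondecreasing and increments by at most $1$).

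The key step is then applying $\texttt{triangle\_window}$ coordinatewise to $\bm{c}_1$. By the case analysis in Equation~\eqref{eq:deftw}, any coordinate equal to $0$ stays $0$, any coordinate equal to $1$ stays $1$, and any coordinate $\ge 2$ becomes $\le 0$ and in fact the window vanishes for arguments $>2$; I need to check the boundary value $2$ maps to $0$, which it does ($2-x$ at $x=2$). Hence after this step the vector $\bm{c}_2$ has exactly one nonzero entry, namely $c_{2,j^\star}=1$, i.e. $\bm{c}_2 = \bm{e}_{j^\star}^\top$, the $j^\star$-th standard basis (row) vector — \emph{provided} $\bm{c}_1$ never takes the value $2$ strictly before its value exceeds $2$, which is automatic since $\texttt{cumsum}$ of a $\{0,1\}$-vector increases by steps of at most $1$ so it hits $2$ before $3$; but the window kills the $2$ as well, so only the coordinate where the cumulative sum first equals $1$ survives. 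Finally, line~4 computes $\bm{P}\bm{c}_2^\top = \bm{P}\bm{e}_{j^\star}^\top$, which is precisely the $j^\star$-th column of $\bm{P}$; by definition of $j^\star$ this column is nonzero and, by the hypothesis that each row has at most one `$1$', it is a valid indicator of a single matched row. This is exactly "the leftmost column that contains at least one `$1$'", as claimed.

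I expect the main obstacle to be the bookkeeping around $\texttt{triangle\_window}$: one must argue carefully that \emph{every} coordinate of the cumulative-sum vector other than the one at $j^\star$ is annihilated, which relies jointly on (i) all coordinates before $j^\star$ being exactly $0$, (ii) all coordinates from $j^\star$ onward being integers, and (iii) the window function sending every integer $\ge 2$ (and $\le 0$) to a nonpositive value that is then effectively $0$ for the subsequent matrix product, together with the fact that $\bm{c}_1$ is nondecreasing so no "later" coordinate can re-enter the interval $(0,1]$. A secondary subtlety is that one should note $\texttt{cumsum}$ of a $\{0,1\}$-vector cannot skip the value $1$, so the survivor is genuinely unique and located at $j^\star$; once these observations are in place the rest is a direct unwinding of the definitions.
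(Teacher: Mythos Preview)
Your argument contains an error at the $\texttt{triangle\_window}$ step. You claim that after this step the vector $\bm{c}_2$ has \emph{exactly one} nonzero entry at position $j^\star$, i.e.\ $\bm{c}_2 = \bm{e}_{j^\star}$, but this is false in general. If the second nonzero column of $\bm{P}$ occurs at some index $k_2 > j^\star + 1$ (or does not exist at all), then the cumulative sum $\bm{c}_1$ equals $1$ not only at position $j^\star$ but at \emph{every} position $j^\star, j^\star+1, \dots, k_2-1$, and $\texttt{triangle\_window}$ preserves all of these $1$'s. Hence $\bm{c}_2 = \sum_{j=j^\star}^{k_2-1}\bm{e}_j$, which in general has several nonzero entries. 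Your stated justification---that $\texttt{cumsum}$ of a $\{0,1\}$-vector cannot skip the value $1$---guarantees only that the value $1$ is attained, not that it is attained at a unique index.

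The conclusion is still correct, but one more observation is needed to recover it: in line~4 the product $\bm{P}\bm{c}_2^\top$ sums columns $j^\star$ through $k_2-1$ of $\bm{P}$, and by the very definition of $k_2$ every one of these except the $j^\star$-th is a zero column, so the sum collapses to $\bm{p}_{j^\star}^\top$. This is precisely the argument the paper gives; your proposal asserts an incorrect intermediate identity in place of this final step.
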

\begin{proof}
    The statement clearly holds when $\bm{P} = \bm{O}_{p,q}$.
    
    Let $\bm{P}\neq \bm{O}_{p,q}$.
    We define the column indices containing at least one `$1$', ordered from left to right, as $k_1, \dots, k_l$.
    Then, $\texttt{colsum}(\bm{P}) = \sum^l_{j=1}\bm{e}_{k_j},$ where $\bm{e}_i$ is a one-hot vector with the $i$-th element equal to $1$. 
    Therefore, we have
    \begin{align*}
        &\texttt{cumsum}(\texttt{colsum}(\bm{P}))\\
        &= \lbrack 0,\dots, \stackrel{k_1-1}{0}, \stackrel{k_1}{1},\dots, \stackrel{k_2-1}{1}, \stackrel{k_2}{2},\dots, \stackrel{q}{l}\rbrack.
    \end{align*}
    From Equation~\ref{eq:deftw}, which defines \texttt{triangle\_window}, we have
    \begin{align*}
        &\texttt{triangle\_window}(\texttt{cumsum}(\texttt{colsum}(\bm{P})))\\
        &= \lbrack 0,\dots, \stackrel{k_1-1}{0}, \stackrel{k_1}{1},\dots, \stackrel{k_2-1}{1}, \stackrel{k_2}{0},\dots, \stackrel{q}{0}\rbrack\\
        &= \sum_{j=k_1}^{k_2-1} \bm{e}_j.
    \end{align*}
    Consequently, the output $\bm{c}^\top$ is
    \begin{align}
        \bm{P}\sum^{k_2-1}_{j=k_1} \bm{e}_j^\top = \sum^{k_2-1}_{j=k_1} \bm{p}_j^\top = \bm{p}_{k_1}^\top, 
    \end{align}
    where $\bm{p}_j^\top$ is the $j$-th column vector of $\bm{P}$.
    The final equality holds because the $(k_1+1)$-th to $(k_2-1)$-th columns are all zero.
\end{proof}

\paragraph{Algorithm~\ref{alg:tsd}.}
\begin{figure}[t]
    \centering
    \includegraphics[width=1.0\linewidth]{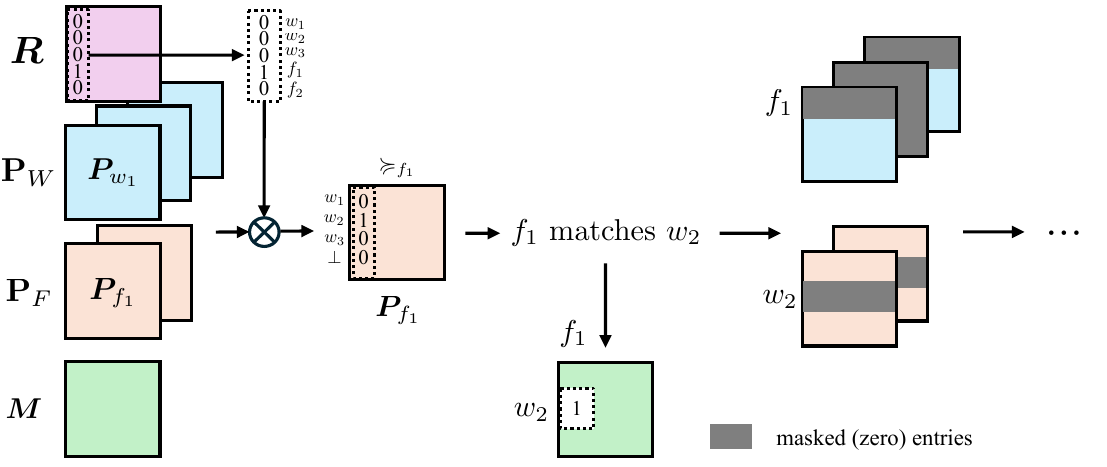}
    \caption{Conceptual visualization of TSD: SD computation using tensor operations.}
    \label{fig:tsd}
\end{figure}
Using Algorithms~\ref{alg:createrankingmasks} and~\ref{alg:findcounterpart}, TSD in Algorithm~\ref{alg:tsd} computes the exact result of SD.
We prove Proposition~\ref{prop:tsd} as follows.
\begin{proof}
    When $r_k = x$ for an agent $x \in W \cup F$, we refer to the agent as the $k$-th active agent (see Appendix~\ref{apdx:prop_sd}). 
    When $r_k = x$ and $x$ selectively matches with an agent $y\neq\perp$, $y$ is said to be {\em passively matched} with $x$.
    For notational convenience, we omit subscripts when denoting the dimensions of zero vectors or zero matrices.
    
    Regarding the main loop starting at Line 3, we propose the following loop invariants based on the loop variable $k$:
    \begin{enumerate}[(A)]
        \item $\forall i \in [n], \bm{P}_{w_i}$ satisfies the two conditions stated in Proposition~\ref{prop:findcounterpart}.
        \item $\forall j \in [m], \bm{P}_{f_j}$ satisfies the two conditions stated in Proposition~\ref{prop:findcounterpart}.
        \item For all agents $x \in W \cup F$, $\bm{P}_x = \bm{O}$ if and only if there exists $r \in [k-1]$ such that $x$ is passively matched by the $r$-th active agent.
        \item For all agents $x \in W \cup F$, the rows corresponding to any agents that are matched in any iteration from $1$ to $k-1$ are all zero, and the other rows remain unchanged from the input.
        \item The tentative result of SD up to the $(k-1)$-th iteration is represented by $\bm{M}$.
    \end{enumerate}
   The loop invariants (A) and (B) follow from (D), but we state them explicitly for the convenience of our proof.
   Initially, before the loop begins, these conditions are clearly met.
   
   Next, we consider the $k$-th iteration assuming that the loop invariants have been maintained up to the $(k-1)$-th iteration.
   Without loss of generality, we assume that $r_k = w_i$ for some $i$. 
   Let $\overline{f}_{i_j} \in \overline{F}$ denote the $j$-th most preferred firm in $\succcurlyeq_{w_i}$. 
   We define $l$ as the rank of the option in $\overline{F}$ that is most preferred in $\succcurlyeq_{w_i}$ among those unmatched until the $(k-1)$-th iteration.

    The internal procedure before and after step $\langle 5\rangle$ is declined below.

    \paragraph{[$\langle 1\rangle$ to $\langle 5\rangle$]}
    Because $\bm{d}_f = \bm{0}$, it follows that $\bm{P}_f = \bm{O}, \bm{c}_f=\bm{0}$, and by Proposition~\ref{prop:findcounterpart}, $\bm{M}_f = \bm{O}$.
    Additionally, $\bm{d}_w = \bm{e}_i$ (a one-hot vector with its $i$-th element being $1$); hence, $\bm{P}_w = \bm{P}_{w_i}$, as indicated in step $\langle 2\rangle$.
    
    Consider the scenario where $w_i$ is passively matched by the $(k-1)$-th iteration. 
    Then, $\bm{P}_w = \bm{O}$ due to the loop invariant (C). 
    Consequently, it follows that $\bm{c}_w = \bm{0}$ and $\bm{M}_w = \bm{O}$. 
    Thus, $\bm{M}$ remains unchanged, and because $w_i$ has already been matched, the loop invariant (E) is maintained.

    Now, consider the case where $w_i$ is not passively matched by the $k$-th iteration.
    By the loop invariant (C), we have $\bm{P}_w \neq \bm{O}$.
    Moreover, the $l$-th column is the leftmost one in $\bm{P}_w$ containing at least one `1'. 
    This assertion holds because if there were another column, $l'$, to the left of $l$, then two conditions could apply: if $\overline{f}_{l'} \neq \perp$ and is matched by the $(k-1)$-th iteration, then the row corresponding to $\overline{f}_{l'}$ would be zero (due to loop invariant (D)), leading to a contradiction; otherwise, it contradicts the definition of $l$. 
    Therefore, from the loop invariant (A) and Proposition~\ref{prop:findcounterpart}, $\bm{c}_w = \bm{e}_{i_l}$, and thus, $\bm{M}_w = [\bm{e}_i\| 0]^\top\bm{e}_{i_l} = \bm{E}_{i,i_l}$ (the matrix with $1$ at the $i$-th row and $i_l$-th column and $0$ elsewhere).
    By updating $\bm{M}$ to $\bm{M} + \bm{M}_w = \bm{M} + \bm{E}_{i,i_l}$, the loop invariant (E) is preserved.
    
    \paragraph{[$\langle 6\rangle$ to $\langle 8\rangle$]}
    Given that $\bm{c}_f = \bm{0}$, it follows that $\bm{V}_F = \bm{O}$. 
    Additionally, $\bm{U}_W^{(k)} = \bm{O}$ because $r_k=w_i$.

    Consider the case where $w_i$ is passively matched by the $(k-1)$-th iteration. 
    As previously stated, $\bm{P}_w = \bm{O}$ and $\bm{c}_w = \bm{0}$, which results in $\bm{V}_W = \bm{O}$. 
    Therefore, $\mathbf{P}_W$ remains unchanged after step $\langle 8\rangle$. 
    In addition, $\mathbf{P}_F$ also remains unchanged because the $i$-th row in all firms' preference matrices is already zero, as per the loop invariant (D). 
    Consequently, the loop invariants (A), (B), (C), and (D) are maintained after this iteration.

    Now, consider the scenario where $w_i$ is not passively matched by the $k$-th iteration. 
    In this case, $w_i$ matches with $\overline{f}_{i_l}$. 
    If $\overline{f}_{i_l} = \perp$, then $\mathbf{P}_W$ remains unchanged because $\bm{c}_w = \bm{e}_{m+1}$; thus, $\bm{V}_W = (-1)\cdot\texttt{repeat}(\bm{e}_{m+1}\circ[\mathds{1}_m\| 0], m+1) = \bm{O}$. 
    Similarly, $\mathbf{P}_F$ remains unchanged because $(\mathds{1}_{m+1} - \bm{c}_w){:m} = \bm{0}$. 
    As a result, the loop invariants (A), (B), (C), and (D) are maintained after this loop.
    
    Consider the situation where $\overline{f}_{i_l} \neq \perp$. 
    In this case, $\bm{V}_W = [\bm{0}^\top\cdots\stackrel{i_l}{-\mathds{1}^\top}\cdots\bm{0}^\top]^\top$, causing the $i_l$-th row in all workers' preference matrices to become zero after step $\langle 8\rangle$. Additionally, because $\bm{U}_F^{(k)} = [\bm{0}^\top\cdots\stackrel{i}{-\mathds{1}^\top}\cdots\bm{0}^\top]^\top$, the $i$-th row in all firms' preference matrices becomes zero after step $\langle 8\rangle$. The preference matrix of $\overline{f}_{i_l}$ is set to $\bm{O}$ by step $\langle 8\rangle$. Hence, the loop invariants (C) and (D), and consequently (A) and (B), are maintained after this loop.

    After the loop terminates, the desired conclusion follows directly from the loop invariant (E).
\end{proof}

\paragraph{Computational Cost.}
In Algorithm~\ref{alg:createrankingmasks}, the computation of $\bm{U}_W^{(k)}$ requires $O((n+m)(m+1)^2) = O((n+m)m^2)$, and the computation of $\bm{U}_F^{(k)}$ requires $O(n(n+1)^2) = O(n^3)$. Therefore, the entire algorithm incurs a time complexity of $O((n+m)\{(n+m)m^2 + n^3\})$.

In Algorithm~\ref{alg:tsd}, the main bottleneck is the update of $\mathbf{P}_W$ and $\mathbf{P}_F$ in step $\langle 7\rangle$, which requires $O(n(m+1)^2 + ((n+1)^2m)) = O(nm^2 + n^2m)$. Hence, the main loop requires $O((n+m)(nm^2 + n^2m))$, and consequently, this incurs a time complexity of $O((n+m)\{(n+m)m^2 + n^3\})$ from Algorithm~\ref{alg:createrankingmasks}.
Assuming $n \geq m$, this time complexity simplifies to $O(n^4)$.


\section{Proof of Proposition~\ref{prop:neuralsd}}
\label{apdx:prop_neuralsd}
The proof of Proposition~\ref{prop:neuralsd} is as follows.
\begin{proof}
  We show that NeuralSD becomes identical to SD when used following the scenario in Section~\ref{sec:background}. 
  Consider an instance $I$. 
  We compute $\tilde{\bm{R}}$ as a hard permutation matrix from contexts $\bm{X}_W$ and $\bm{X}_F$ by applying the $\texttt{argsort}$ operator.
  Let $\mathrm{NeuralSD}(\cdot, \cdot, \bm{X}_W, \bm{X}_F)$ denote the partially inputted NeuralSD with fixed $\tilde{\bm{R}}$. 
  By its construction, this is identical to $\mathrm{TSD}(\cdot, \cdot, \tilde{\bm{R}})$. 
  From Proposition~\ref{prop:tsd}, this function is identical to SD with the ranking represented in $\tilde{\bm{R}}$. 
  Therefore, using $\mathrm{NeuralSD}(\cdot, \cdot, \bm{X}_W, \bm{X}_F)$ to determine a matching from reports is equivalent to executing SD with the ranking. 
  Consequently, we maintain the SP and Pareto efficiency of NeuralSD, both inherited from SD, as in Proposition~\ref{prop:sd}.
\end{proof}

\section{Experimental Details}
\label{apdx:experimental_details}
\begin{table*}[t]
    \centering
    \caption{
        Required computational resources for experiments.
        For individual computation, the `train' values correspond to one run over 1,000 training instances, and `test' values correspond to one run over 750 test instances.
    }
    \label{tab:compute_resources}
    \begin{tabular}{cc}
    \toprule
       Compute workers & CPU\\\cmidrule(lr){1-2}
       Memory& 16GB (train/test on $n\in \{3,5,10,20,40\}$)\\
       &2TB (test on $n\in \{80, 120, 160, 200\}$)\\\cmidrule(lr){1-2}
       Individual Computation& $<$ 5 min (train/test on $n\in \{3,5,10,20\}$)\\
       & 30 min (train/test on $n\in \{40, 80, 120, 160\}$)\\
       & 90 min (test on $n = 200$)\\\bottomrule
    \end{tabular}
\end{table*}

\begin{table*}[t]
    \centering
    \caption{
        Experimental results on different scales.
        Notations are idencical to those used in Table~\ref{tab:results}.
    }
    \label{tab:results_full}
    \scalebox{0.8}{
    \begin{tabular}{clrrrrrrrr}
    \toprule
    &&\multicolumn{4}{c}{Deferred Acceptance (DA)}&\multicolumn{2}{c}{Equal-weighted Hungarian (EH)}&\multicolumn{2}{c}{Minority-weighted Hungarian (MH)}\\\cmidrule(lr){3-6}\cmidrule(lr){7-8}\cmidrule(lr){9-10}
    $n$&Model&HD$\downarrow(\times 10^{-1})$&\#BP$\downarrow(\times 10^{-2})$&SV$\downarrow(\times 10^{-2})$&IRV$\downarrow$&HD$\downarrow(\times 10^{-1})$&RW$\uparrow$&HD$\downarrow(\times 10^{-1})$&RW$\uparrow$\\\cmidrule(lr){1-1}\cmidrule(lr){2-2}\cmidrule(lr){3-6}\cmidrule(lr){7-8}\cmidrule(lr){9-10}

    $3$&RSD&$3.05\pm 2.82$&$9.53\pm 10.0$&$1.76 \pm 2.16$&$0.00\pm0.00$&$3.07\pm2.80$&$0.968\pm0.0467$&$3.24\pm2.81$&$0.957\pm0.0547$\\
    &NeuralSD&$\mathbf{2.94\pm2.74}$&$\mathbf{9.38\pm10.0}$&$\mathbf{1.71\pm2.19}$&$0.00\pm0.00$&$\mathbf{2.84\pm2.79}$&$\mathbf{0.974\pm0.0441}^\ddagger$&$\mathbf{2.98\pm2.82}^\dagger$&$\mathbf{0.965\pm0.0509}^\ddagger$\\\cmidrule(lr){1-1}\cmidrule(lr){2-2}\cmidrule(lr){3-6}\cmidrule(lr){7-8}\cmidrule(lr){9-10}

    $5$&RSD&$3.90\pm 1.97$&$11.2\pm 7.76$&$1.55\pm 1.46$&$0.00\pm0.00$&$3.87\pm1.93$&$0.944\pm0.0471$&$4.04\pm1.94$&$0.932\pm0.0520$\\
    &NeuralSD&$\mathbf{3.81\pm2.03}$&$\mathbf{10.6\pm7.83}^\dagger$&$\mathbf{1.38\pm1.41}^\ddagger$&$0.00\pm0.00$&$\mathbf{3.67\pm1.98}^\dagger$&$\mathbf{0.950\pm0.0468}^\ddagger$&$\mathbf{3.63\pm1.99}^\ddagger$&$\mathbf{0.945\pm0.0495}^\ddagger$\\\cmidrule(lr){1-1}\cmidrule(lr){2-2}\cmidrule(lr){3-6}\cmidrule(lr){7-8}\cmidrule(lr){9-10}

    $10$&RSD &$4.61\pm 1.23$&$12.3\pm 5.18$&$1.30 \pm 0.815$&$0.00\pm0.00$&$4.56\pm1.17$&$0.918\pm0.0359$&$4.66\pm1.20$&$0.910\pm0.0410$\\
    &NeuralSD &$\mathbf{4.57\pm 1.25}$&$\mathbf{11.1\pm 5.48}^\ddagger$&$\mathbf{1.04\pm 0.813}^\ddagger$&$0.00\pm0.00$&$\mathbf{4.35\pm1.23}^\ddagger$&$\mathbf{0.930\pm0.0372}^\ddagger$&$\mathbf{4.33\pm1.29}^\ddagger$&$\mathbf{0.925\pm0.0418}^\ddagger$\\\cmidrule(lr){1-1}\cmidrule(lr){2-2}\cmidrule(lr){3-6}\cmidrule(lr){7-8}\cmidrule(lr){9-10}

    $20$&RSD&$5.14\pm 0.777$&$12.6\pm 3.67$&$1.17 \pm0.560$&$0.00\pm0.00$&$5.10\pm0.741$&$0.899\pm0.0277$&$5.13\pm0.745$&$0.892\pm0.0307$\\
    &NeuralSD&$\mathbf{4.99\pm0.814}^\ddagger$&$\mathbf{10.7\pm3.56}^\ddagger$&$\mathbf{0.851\pm0.489}^\ddagger$&$0.00\pm0.00$&$\mathbf{4.83\pm0.806}^\ddagger$&$\mathbf{0.916\pm0.0278}^\ddagger$&$\mathbf{4.97\pm0.788}^\ddagger$&$\mathbf{0.902\pm0.0307}^\ddagger$\\\cmidrule(lr){1-1}\cmidrule(lr){2-2}\cmidrule(lr){3-6}\cmidrule(lr){7-8}\cmidrule(lr){9-10}

    $40$&RSD&$5.44\pm 0.474$&$11.9\pm 2.41$&$0.969\pm 0.338$&$0.00\pm0.00$&$5.41\pm0.485$&$0.891\pm0.0188$&$5.41\pm0.481$&$0.885\pm0.0213$\\
    &NeuralSD&$\mathbf{5.38\pm 0.509}^\ddagger$&$\mathbf{11.3\pm 2.40}^\ddagger$&$\mathbf{0.890\pm 0.329}^\ddagger$&$0.00\pm0.00$&$\mathbf{5.08\pm0.522}^\ddagger$&$\mathbf{0.912\pm0.0197}^\ddagger$&$\mathbf{5.13\pm0.509}^\ddagger$&$\mathbf{0.904\pm0.0217}^\ddagger$\\\cmidrule(lr){1-1}\cmidrule(lr){2-2}\cmidrule(lr){3-6}\cmidrule(lr){7-8}\cmidrule(lr){9-10}

    $80$&RSD&$5.63\pm0.343$&$10.7\pm1.63$&$0.774\pm0.205$&$0.00\pm0.00$&$5.62\pm0.311$&$0.892\pm0.0130$&$5.64\pm0.322$&$0.884\pm0.0148$\\
    &NeuralSD&$\mathbf{5.58\pm0.335}^\ddagger$&$\mathbf{10.2\pm1.57}^\ddagger$&$\mathbf{0.722\pm0.199}^\ddagger$&$0.00\pm0.00$&$\mathbf{5.33\pm0.343}^\ddagger$&$\mathbf{0.912\pm0.0140}^\ddagger$&$\mathbf{5.37\pm0.353}^\ddagger$&$\mathbf{0.903\pm0.0156}^\ddagger$\\\cmidrule(lr){1-1}\cmidrule(lr){2-2}\cmidrule(lr){3-6}\cmidrule(lr){7-8}\cmidrule(lr){9-10}
    $120$&RSD&$5.73\pm0.258$&$10.0\pm1.27$&$0.691\pm0.155$&$0.00\pm0.00$&$5.71\pm0.243$&$0.893\pm0.00999$&$5.74\pm0.224$&$0.884\pm0.0105$\\
    &NeuralSD&$\mathbf{5.68\pm0.280}^\ddagger$&$\mathbf{9.59\pm1.22}^\ddagger$&$\mathbf{0.646\pm0.148}^\ddagger$&$0.00\pm0.00$&$\mathbf{5.44\pm0.284}^\ddagger$&$\mathbf{0.912\pm0.0116}^\ddagger$&$\mathbf{5.48\pm0.272}^\ddagger$&$\mathbf{0.904\pm0.0123}^\ddagger$\\\cmidrule(lr){1-1}\cmidrule(lr){2-2}\cmidrule(lr){3-6}\cmidrule(lr){7-8}\cmidrule(lr){9-10}

    $160$&RSD&$5.81\pm0.216$&$9.69\pm1.07$&$0.649\pm0.126$&$0.00\pm0.00$&$5.79\pm0.199$&$0.893\pm0.00854$&$5.81\pm0.203$&$0.886\pm0.00966$\\
    &NeuralSD&$\mathbf{5.74\pm0.222}^\ddagger$&$\mathbf{9.15\pm1.08}^\ddagger$&$\mathbf{0.591\pm0.125}^\ddagger$&$0.00\pm0.00$&$\mathbf{5.52\pm0.240}^\ddagger$&$\mathbf{0.913\pm0.00957}^\ddagger$&$\mathbf{5.54\pm0.237}^\ddagger$&$\mathbf{0.905\pm0.0108}^\ddagger$\\\cmidrule(lr){1-1}\cmidrule(lr){2-2}\cmidrule(lr){3-6}\cmidrule(lr){7-8}\cmidrule(lr){9-10}
    
    $200$&RSD&$5.85\pm 0.193$&$9.36\pm 0.944$&$0.610\pm 0.109$&$0.00\pm0.00$&$5.83\pm0.180$&$0.895\pm0.00756$&$5.84\pm0.180$&$0.888\pm0.00847$\\
    &NeuralSD&$\mathbf{5.80 \pm 0.207}^\ddagger$&$\mathbf{8.96 \pm 0.943}^\ddagger$&$\mathbf{0.571\pm 0.106}^\ddagger$&$0.00\pm0.00$&$\mathbf{5.54\pm0.205}^\ddagger$&$\mathbf{0.915\pm0.00821}^\ddagger$&$\mathbf{5.59\pm0.207}^\ddagger$&$\mathbf{0.906\pm0.00920}^\ddagger$\\\bottomrule
    \addlinespace[1ex]
    \multicolumn{10}{r}{$^\dagger p < 0.05\quad ^\ddagger p < 0.01$}
    \end{tabular}
    }
\end{table*}

\begin{table*}[t]
    \centering
    \caption{
        Evaluation metrics for DA with stability regularization.
    }
    \label{tab:results_lam}
    \begin{tabular}{clrrrr}
    \toprule
    &&\multicolumn{4}{c}{Deferred Acceptance (DA)}\\\cmidrule(lr){3-6}
    $n$&Model&HD $\downarrow(\times 10^{-1})$&\#BP $\downarrow(\times 10^{-2})$&SV $\downarrow(\times 10^{-2})$&IRV$\downarrow$\\\cmidrule(lr){1-1}\cmidrule(lr){2-2}\cmidrule(lr){3-6}
    
    $3$&RSD&$3.05\pm 2.82$&$9.53\pm 10.0$&$1.76 \pm 2.16$&$0.00\pm0.00$\\
    &NeuralSD&$\mathbf{3.01\pm 2.79}$&$\mathbf{9.26\pm 9.79}$&$\mathbf{1.69 \pm 2.15}$&$0.00\pm0.00$\\\cmidrule(lr){1-1}\cmidrule(lr){2-2}\cmidrule(lr){3-6}

    $5$&RSD&$3.90\pm 1.97$&$11.2\pm 7.76$&$1.55\pm 1.46$&$0.00\pm0.00$\\
    &NeuralSD&$\mathbf{3.82\pm 2.03}$&$\mathbf{10.7\pm 7.90}$&$\mathbf{1.37\pm 1.43}^\ddagger$&$0.00\pm0.00$\\\cmidrule(lr){1-1}\cmidrule(lr){2-2}\cmidrule(lr){3-6}

    $10$&RSD&$4.61\pm 1.23$&$12.3\pm 5.18$&$1.30 \pm 0.815$&$0.00\pm0.00$\\
    &NeuralSD&$\mathbf{4.48\pm 1.38}^\ddagger$&$\mathbf{11.2\pm 5.67}^\ddagger$&$\mathbf{1.07\pm 0.881}^\ddagger$&$0.00\pm0.00$\\\cmidrule(lr){1-1}\cmidrule(lr){2-2}\cmidrule(lr){3-6}

    $20$&RSD&$5.14\pm 0.777$&$12.6\pm 3.67$&$1.17 \pm 0.560$&$0.00\pm0.00$\\
    &NeuralSD&$\mathbf{5.00\pm 0.797}^\ddagger$&$\mathbf{11.2\pm 3.55}^\ddagger$&$\mathbf{0.920\pm 0.507}^\ddagger$&$0.00\pm0.00$\\\cmidrule(lr){1-1}\cmidrule(lr){2-2}\cmidrule(lr){3-6}

    $40$&RSD&$5.44\pm 0.474$&$11.9\pm 2.41$&$0.969\pm 0.338$&$0.00\pm0.00$\\
    &NeuralSD&$\mathbf{5.30\pm 0.502}^\ddagger$&$\mathbf{10.5\pm 2.44}^\ddagger$&$\mathbf{0.777\pm 0.315}^\ddagger$&$0.00\pm0.00$\\\midrule

    $80$&RSD&$5.63\pm 0.343$&$10.7\pm 1.63$&$0.774 \pm 0.205$&$0.00\pm0.00$\\
    &NeuralSD&$\mathbf{5.54\pm 0.346}^\ddagger$&$\mathbf{9.67\pm 1.69}^\ddagger$&$\mathbf{0.647 \pm 0.208}^\ddagger$&$0.00\pm0.00$\\\cmidrule(lr){1-1}\cmidrule(lr){2-2}\cmidrule(lr){3-6}

    $120$&RSD&$5.73\pm 0.258$&$10.0\pm 1.27$&$0.691 \pm 0.155$&$0.00\pm0.00$\\
    &NeuralSD&$\mathbf{5.65\pm 0.276}^\ddagger$&$\mathbf{9.14\pm 1.29}^\ddagger$&$\mathbf{0.580 \pm 0.152}^\ddagger$&$0.00\pm0.00$\\\cmidrule(lr){1-1}\cmidrule(lr){2-2}\cmidrule(lr){3-6}

    $160$&RSD&$5.81\pm 0.216$&$9.69\pm 1.07$&$0.649 \pm 0.126$&$0.00\pm0.00$\\
    &NeuralSD&$\mathbf{5.71\pm 0.227}^\ddagger$&$\mathbf{8.63\pm 1.10}^\ddagger$&$\mathbf{0.526 \pm 0.122}^\ddagger$&$0.00\pm0.00$\\\cmidrule(lr){1-1}\cmidrule(lr){2-2}\cmidrule(lr){3-6}
    
    $200$&RSD&$5.85\pm 0.193$&$9.36\pm 0.944$&$0.610\pm 0.109$&$0.00\pm0.00$\\
    &NeuralSD&$\mathbf{5.75 \pm 0.203}^\ddagger$&$\mathbf{8.42 \pm 0.984}^\ddagger$&$\mathbf{0.504\pm 0.109}^\ddagger$&$0.00\pm0.00$\\\bottomrule
    \addlinespace[1ex]
    \multicolumn{6}{r}{$^\ddagger p < 0.01$}
    \end{tabular}
\end{table*}

\begin{table}[t]
    \centering
    \caption{
        Recovery rate of optimal rankings with stability regularization.
    }
    \label{tab:recovery_rate_lam}
    \scalebox{0.9}{
    \begin{tabular}{lr}
    \toprule
      Model&  Recovery Rate$\uparrow$ ($^\ddagger p < 0.01$)\\\cmidrule(lr){1-1}\cmidrule(lr){2-2}
      RSD (baseline)& $0.430\pm 0.0204$\\
      NeuralSD (ours)& $\mathbf{0.451\pm 0.0195}^\ddagger$ \\\bottomrule
    \end{tabular}
    }
\end{table}

\subsection{Experimental Setup}
\paragraph{Matching Mechanisms.}
Consider a reported profile $\succcurlyeq$ with $n$ workers and $m$ firms.
The two variants of the Hungarian rules determine a matching result by solving the following optimization problem:
\begin{align}
    \max_{\bm{M}} &\sum_{i=1}^{n+1}\sum_{j=1}^{m+1} r^h(i,j)M_{i,j},\label{eq:reward}\\
    \mathrm{s.t.,} & \sum_{j=1}^{m+1} M_{i,j} = 1,\quad i = 1,\dots, n,\notag\\
                   & \sum_{i=1}^{n+1} M_{i,j} = 1,\quad j = 1,\dots, m,\notag\\
                   & M_{n+1,m+1} = 0,\notag\\
                   & M_{i,j}\in \{0,1\},\quad i=1,\dots,n+1;j=1,\dots,m+1,\notag
\end{align}
where $h\in\{\mathrm{EH},\mathrm{MH}\}$ denotes the type of matching mechanism, and $r^h(i,j)$ denotes the reward for matching $w_i$ and $f_j$.
We define $r^h(i,j)$ as follows:
\begin{align*}
    r^h(i,j) := \alpha^h(i)\mathrm{invord}(\overline{f}_j,\succcurlyeq_{\overline{w}_i}) + \mathrm{invord}(\overline{w}_i,\succcurlyeq_{\overline{f}_j}),
\end{align*}
where $\overline{w}_i := w_i$ for $i\in [n]$ and $\overline{w}_{n+1} := \perp$.
$\overline{f}_j$ is defined in the same way.
$\mathrm{invord}(\overline{f}, \succcurlyeq_w) = m+2 - \mathrm{ord}(\overline{f},\succcurlyeq_w) = \#\{\overline{f'}\in\overline{F}\mid \overline{f}\succcurlyeq_w\overline{f'}\}$ is the inverted order, and we define $\mathrm{ord}(\cdot,\succcurlyeq_{\overline{w}_{n+1}}) := 0$.
We define the inverted order for $\succcurlyeq_f$ in the same way.
$\alpha^h(i)$ represents the additional weights for $w_i$.
EH gives equal weights to all the workers: $\alpha^\mathrm{EH}(i) \equiv 1 (\forall i\in [n])$, and MH gives higher weights to $1/3^\mathrm{rd}$ of randomly chosen workers: $\alpha^\mathrm{MH}(i) := 2$ if $i$ is among the selected workers, and $\alpha^\mathrm{MH}(i) := 1$ otherwise.
For each train and test instance, the random selection of workers is performed individually.
We solved the optimization problem using the Gurobi optimizer~\cite{gurobi}.

EH and MH are both Pareto efficient.
This is because, if a Pareto improvement exists for the obtained result, the inverted order of one agent will strictly increase, while the inverted order of the other agent will either remain unchanged or increase. 
As a result, the objective function in Equation~\eqref{eq:reward} will strictly increase, which contradicts optimality.

\paragraph{Definition of Preferences.}
For an instance $\langle W, F, \bm{X}_W, \bm{X}_F \rangle$, we define the preferences of both workers and firms as follows:
For a worker $w \in W$ (and similarly defined for every firm), we define $\succcurlyeq_w$ such that for any $f, f' \in F$, $f \succcurlyeq_w f'$ if and only if $\|\bm{x}_w - \bm{x}_f\|^2 \le \|\bm{x}_w - \bm{x}_{f'}\|^2$, and $f \succcurlyeq_w \perp$ if $\|\bm{x}_w - \bm{x}_f\|^2 \le t$, where $t$ is a threshold parameter. 
We fixed $t = 8$ in all the experiments.
Tie-breaking was performed by index order.

\paragraph{Evaluation Metrics.}
Here, we provide a formal definition of evaluation metrics. 
Because we only considered matchings with the same number of workers and firms in all experiments, we denote this number by $n$.

The Hamming distance between two matching matrices $\bm{M}$ and $\bm{M}'$ is $\sum_{i,j}|M_{ij} - M'_{ij}|$.
When $\bm{M},\bm{M}'\in\{0,1\}^{(n+1)\times (n+1)}$ are two matching matrices for $n$ workers and $n$ firms, the maximum value of Hamming distance is $3n$ where $\bm{M}$ represents a matching where every agent matches with $\perp$, and $\bm{M}'$ represents a matching with no agents being unmatched.
Therefore, we normalized the Hamming distance by dividing it by $3n$.

A matching matrix represents a unique matching, so the blocking pairs of a matching matrix are defined by the matching they represent according to Section~\ref{sec:background}.
The maximum number of blocking pairs is $n^2$; hence, we normalize the number of blocking pairs by dividing it by $n^2$.

Stability violation was proposed by \citet{ravindranath2023deep} to measure the degree of deviation from stable matchings under reports $\succcurlyeq$.
To define the stability violation, they expressed a reported profile $\succcurlyeq$ by rational numbers.
Formally, they defined $[p^\succcurlyeq_{i1},\dots,p^\succcurlyeq_{in}]$ and $[q^\succcurlyeq_{j1},\dots,q^\succcurlyeq_{jn}]$ to represent the reports $\succcurlyeq_{w_i}$ and $\succcurlyeq_{f_j}$, respectively, as follows:
\begin{align}
    &p^\succcurlyeq_{ij'} \notag\\
    &:= \frac{1}{n}\left(\mathbb{I}[f_{j'}\succ_{w_i}\perp] + \sum^n_{j''=1}(\mathbb{I}[f_{j'}\succ_{w_i}f_{j''}]-\mathbb{I}[\perp\succ_{w_i}f_{j''}])\right),\notag\\
    &q^\succcurlyeq_{ji'} \notag\\
    &:= \frac{1}{n}\left(\mathbb{I}[w_{i'}\succ_{f_j}\perp] + \sum^n_{i''=1}(\mathbb{I}[w_{i'}\succ_{f_j}w_{i''}] - \mathbb{I}[\perp\succ_{f_j}w_{i''}])\right),\notag\\
    \label{eq:preferece_vector}
\end{align}
where $a\succ_b a'$ means $a\neq a'~\land~a\succcurlyeq_b a'$.
Then, for a worker $w_i$ and firm $f_j$, they defined the stability violation of a matching represented by $\bm{M}$ at reports $\succcurlyeq$ as 
\begin{align*}
    &stv_{w_if_j}(\bm{M},\succcurlyeq) \\
    &:= \left(\sum_{i'=1}^n M_{i'j}\cdot\max\{q^\succcurlyeq_{ij} - q^\succcurlyeq_{i'j},0\}\right)\\
    &\phantom{:=} \times \left(\sum_{j'=1}^n M_{ij'}\cdot\max\{p^\succcurlyeq_{ij} - p^\succcurlyeq_{ij'},0\}\right).
\end{align*}
As a result, the stability violation on profile $\succcurlyeq$ is defined as 
\begin{align}
    stv(\bm{M},\succcurlyeq) := \frac{1}{n}\sum^n_{i=1}\sum^n_{j=1}stv_{w_if_j}(\bm{M},\succcurlyeq).
    \label{eq:def_stv}
\end{align}
\citet{ravindranath2023deep} proved that this value is 0 if and only if the matching is stable.
We measured this value for each test instance.
We normalized the stability violation by dividing it by $n$, considering that the summation is executed $n^2$ times and is already divided by $n$ in the definition.

\begin{figure*}[t]
    \centering
    \begin{minipage}[b]{0.23\linewidth}
        \centering
        \includegraphics[width=\linewidth]{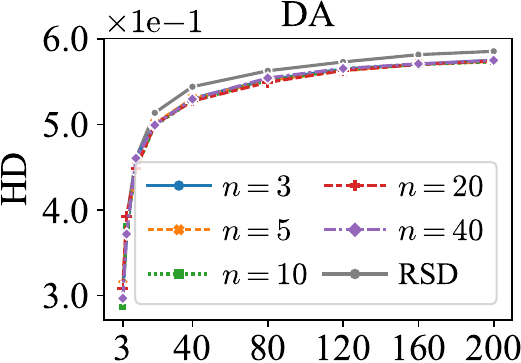}
    \end{minipage}
    \begin{minipage}[b]{0.23\linewidth}
        \centering
        \includegraphics[width=\linewidth]{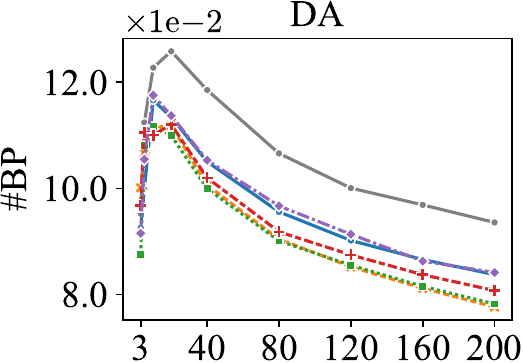}
    \end{minipage}
    \begin{minipage}[b]{0.23\linewidth}
        \centering
        \includegraphics[width=\linewidth]{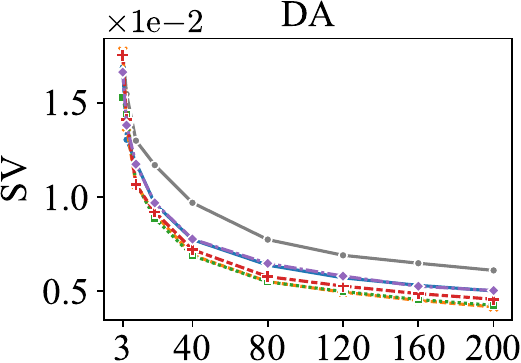}
    \end{minipage}
    \begin{minipage}[b]{0.23\linewidth}
        \centering
        \includegraphics[width=\linewidth]{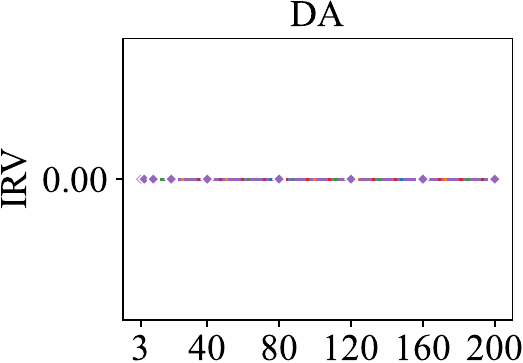}
    \end{minipage}
    \caption{
        Average performances across different scales for training and test data with stability regularization. 
    }
    \label{fig:test_cross_DA_lam}
\end{figure*}

IR violation on profile $\succcurlyeq$ is defined as~\cite{ravindranath2023deep}
\begin{align*}
    irv(\bm{M},\succcurlyeq) := &\frac{1}{2n}\sum^n_{i=1}\sum^n_{j=1}M_{i,j}\cdot\max\{-q^\succcurlyeq_{ij},0\}\\
    &+  \frac{1}{2n}\sum^n_{i=1}\sum^n_{j=1}M_{i,j}\cdot\max\{-p^\succcurlyeq_{ij},0\}.\label{eq:def_irv}
\end{align*}
This value reaches its minimum of $0$ when the matching is individually rational (IR), and its maximum of $1$ when each agent ranks their matched partner as the lowest.

The reward (RW) is defined by the value of the objective function in Equation~\eqref{eq:reward}.
Let $\succcurlyeq$ be a reported profile and $\bm{M}^\mathrm{opt}$ be the corresponding example matching, i.e., the solution of the problem~\eqref{eq:reward}.
Then, the reward for a predicted matching $\bm{M}$ is defined as the approximation ratio
\begin{align*}
    &\mathrm{RW}(\bm{M},\succcurlyeq) \\
    &:= \left(\sum^{n+1}_{i=1}\sum^{m+1}_{j=1}r^{h}(i,j)M_{i,j}\right) \biggl/ \left(\sum^{n+1}_{i=1}\sum^{m+1}_{j=1}r^{h}(i,j)M^\mathrm{opt}_{i,j}\right).
\end{align*}

\paragraph{Parameter Settings and Computational Resources.}
We set the attention embedding dimension $d_{\rm emb} = 10$ and SoftSort temperature parameter $\tau = 0.1$.
We trained models for 5 epochs on $n\in \{3,5,10,20\}$ with batch size 4, and for 10 epochs on $n=40$ with the same batch size using the Adam optimizer and gradient clipping with maximum $L_1$ norm at $10$.
The computational resources are listed in Table~\ref{tab:compute_resources}.
We conducted experiments on a macOS machine with an Apple M1 chip as the 16 GB machine and a CPU cluster with 256 cores of AMD EPYC 7763 processors as the 2 TB machine.

\subsection{Test Performance on Other Scales}
Supplementary results are presented in Table~\ref{tab:results_full}.
In this table, $n$ is equal in both the training and test data when $n\le 40$.
For test data with $n\ge 80$, we train NeuralSD on a dataset with  $n=40$.
The full results for all combinations of $n$ in training and test data are omitted due to space limitations.

\subsection{Experiments with Stability Regularization}
\label{apdx:stability_reg}
We conducted another experiments with examples generated by DA where the stability violation is added to the vanilla objective function.
We defined our loss function as
\begin{align}
    \mathrm{Loss}(\bm{\theta}) := \sum^L_{\ell=1} \left(\mathcal{L}(\hat{\bm{M}}^\ell, \bm{M}^\ell) + \frac{\lambda}{L} stv(\hat{\bm{M}}^\ell, \succcurlyeq^\ell)\right),
\end{align}
where $\hat{\bm{M}}^\ell :=\mathrm{NeuralSD}(\mathbf{P}_{W_{I^\ell}}, \mathbf{P}_{F_{I^\ell}}, \bm{X}_{W_{I^\ell}}, \bm{X}_{F_{I^\ell}})$ is the $\ell$-th prediction, $\mathcal{L}$ is the loss function defined in Equation \eqref{eq:defD}, and $\lambda$ is a hyper-parameter.
We set $\lambda=0.1$.
With slight abuse of notation, we denote by $stv(\hat{\bm{M}}^\ell, \succcurlyeq^\ell)$ the stability violation during training obtained by substituting (non-matching matrix) $\hat{\bm{M}}^\ell$ into Equation \eqref{eq:def_stv}.
The experimental results are provided in Table~\ref{tab:results_lam}, Table~\ref{tab:recovery_rate_lam}, and Figure~\ref{fig:test_cross_DA_lam}.

\section{Proof of Proposition~\ref{prop:irv_upper_bound_sd}}
\label{apdx:prop_irv_bound_sd}
We prove Proposition~\ref{prop:irv_upper_bound_sd} as follows.
\begin{proof}
    Let $n$ and $m$ be arbitrary, and consider an instance with $n$ workers and $m$ firms, where we assume, without loss of generality, that $n \geq m$.
    Fix an arbitrary agent ranking $\bm{r}$ for this instance, and let $\bm{M}_{\bm{r}}$ be the matching obtained by SD on $\bm{r}$.
    
    Each of the two agents matched in $\bm{M}_{\bm{r}}$ is either active in a round or passively matched with their counterpart (see Appendix~\ref{apdx:tensorserialdictatorship}).
    Let $\#_w$ and $\#_f$ be the numbers of workers and firms who are passively matched, respectively.
    The maximum number of pairs is $m$, so it holds that $\#_f + \#_w \le m$.
    
    IRV is maximized when the truthful preference profile satisfies the condition that all passively matched agents rank the counterpart as the lowest.
    This is because the terms in the summation are $0$ for unmatched and active agents, while they can be positive for passively matched agents, and lowering the ranks of counterparts for passively matched agents increases the overall sum.
    
    In this case, the value of $p$ or $q$ in Equation~\eqref{eq:preferece_vector} of the passively matched agents towards the counterpart is $-1$.
    As a result, IRV of $\bm{M}_{\bm{r}}$ on such a profile $\succcurlyeq$ is 
    \begin{align*}
        irv(\bm{M}_{\bm{r}}, \succcurlyeq) 
        &= \frac{1}{2n} \#_f + \frac{1}{2m} \#_w \\
        &\le \frac{1}{2m} (\#_f + \#_w) \\
        &\le \frac{1}{2m} \cdot m = \frac{1}{2}.
    \end{align*}
    This inequality holds for arbitrary $\bm{r}$ and $\succcurlyeq$ that satisfy the above condition.
    IRV for $\succcurlyeq$ that does not satisfy the condition has the same upper bound because the summation in Equation~\eqref{eq:def_irv} may decrease due to the ranks of passively matched agents.
    Therefore, the upper bound of IRV across all instances $I$ and agent rankings $\bm{r}$ is $1/2$.
\end{proof}

\section{Proof of Proposition~\ref{prop:computational_complexity_ranking}}
\label{apdx:prop_computational_complexity_ranking}
\begin{proof}
    The computational cost of this block totals $O((n+m)^2)$, comprising $O((n+m)^2 d_{\rm emb})$ for self-attention \cite{vaswani2017attention}, $O((n+m) d_{\rm emb})$ for one-dimensional linear transformations, $O((n+m)\log{(n+m)})$ for sorting in tie-breaking, and $O((n+m)^2)$ for $\mathrm{SoftSort}_\tau$.
    The total complexity is the sum of these four complexities.
\end{proof}

\end{document}